\newcommand{\ol}[1]{\overline{#1}}
\newcommand{\Cos}{C}
\newcommand{\Sin}{S}
\theoremstyle{plain}
\newtheorem{theorem}{Theorem}[section]      
\theoremstyle{definition}
\theoremstyle{remark}
\title{Adaptive Multirobot Virtual Structure Control using Dual Quaternions}
\author{Juan I. Giribet, Alejandro S. Ghersin, Ignacio Mas, Harrison Neves Marciano, Daniel Khede Dourado Villa, M\'ario Sarcinelli-Filho}
\date{}
\begin{document}

\maketitle

\begin{abstract}
This paper presents a control strategy based on dual quaternions for the coordinated formation flying of small UAV groups. A virtual structure is employed to define the desired formation, enabling unified control of its position, orientation, and shape. This abstraction makes formation management easier by allowing a low-level controller to compute individual UAV commands efficiently. The proposed controller integrates a pose control module with a geometry-based adaptive strategy, ensuring precise and robust task execution. The effectiveness of the approach is demonstrated through both simulation and experimental results.
\end{abstract}

\section{Introduction}

Unmanned Aerial Vehicles (UAVs), particularly multi-rotor platforms, have rapidly advanced in research and applications due to their unique capabilities, including vertical takeoff and landing (VTOL), hovering, and high maneuverability. These features make them ideal for complex environments and have driven their adoption in fields such as environmental monitoring, precision agriculture, infrastructure inspection, and emergency response, among others.

A key area of recent interest is the control and coordination of multiple UAVs in formation. Formation control enables groups of UAVs to maintain specific geometric arrangements while performing tasks, offering advantages such as enhanced coverage, efficiency, and redundancy \cite{Yang2022}. These benefits are critical for applications ranging from search and rescue to cooperative tasks like cargo transport and aerial cinematography. In cargo transportation, for instance, distributing payloads across multiple UAVs allows the delivery of heavier loads \cite{Jiaming2023Multiple,Rizaldi2025MultipleUAVLoad,Bacheti2025TensionBased}, especially in remote or disaster-affected areas. In aerial filming, coordinated UAVs can capture dynamic multi-angle footage, revolutionizing creative possibilities in the entertainment industry \cite{Mademlis2023MultiUAVsFilming}.

The progress in formation control is closely tied to advancements in UAV technology, including improved autonomy, communication systems, and energy efficiency. However, challenges such as stability under external disturbances, communication delays, and the scalability of control algorithms remain areas of active research.

This paper introduces a dual quaternion-based control strategy for the formation flight of small UAV groups. The approach leverages a virtual structure to manage the formation's position and orientation, simplifying task coordination. This method is particularly effective for cooperative tasks involving small groups of UAVs. Unlike traditional approaches, this strategy treats the formation's shape variables as parameters, enabling adaptable control of the UAVs’ poses.

By abstracting formation control into a virtual structure, operators can command the position, orientation, and geometric parameters of the formation intuitively \cite{Sarcinelli-Filho2023,Lewis1997Virtual,Bu2024}. For instance, a triangular formation of three UAVs can be controlled via its center and spatial orientation, reducing operational complexity. While this architecture addresses many practical challenges, limitations such as singularities in certain formations or transitions between different geometries are considered.

The approach presented in \cite{Giribet2025} demonstrated the feasibility of controlling formations of two and three UAVs using dual quaternions, offering a compact and geometrically consistent representation. However, that formulation relied on fixed controller gains and did not consider the influence of the formation's geometry on control performance.

Recent research has further expanded on this dual-quaternion framework, addressing key limitations identified in early works. For instance, \cite{Guo2022} introduced formation control strategies for multi-robot systems under communication delays, enhancing robustness in distributed coordination. Similarly, \cite{Wu2023} proposed an adaptive control scheme based on dual quaternions capable of handling input saturation while preserving stability. The survey in \cite{Farias2023} provided a comprehensive overview of dual-quaternion control methods for multi-agent systems, highlighting their advantages in representing coupled translational and rotational dynamics. Moreover, \cite{Yuan2022} extended the application of these methods to UAVs carrying suspended loads, demonstrating their potential for cooperative aerial manipulation tasks.

The main contribution of this work consists in the introduction 
of a control law that adapts the controller gains based on the geometric configuration of the UAV formation. This adaptation is motivated by the observation that sensor noise and external disturbances affect formations differently depending on the relative distances and orientations between UAVs. To support this adaptive approach, the convergence results from the fixed-gain case are extended to accommodate continuously varying gains, ensuring stability and performance across different geometric configurations.

Additionally, a partial pose representation is introduced, allowing the controller to operate even when the global pose of the formation is not fully defined—such as the case when only relative positions or orientations are relevant to the task—without requiring structural modifications to the control law.

Simulation and experimental results are presented to validate the proposed strategy. These results demonstrate that adapting the controller gains to the formation geometry improves tracking performance, especially in scenarios with changing configurations or increased sensitivity to disturbances.

\section{DUAL QUATERNIONS}
Accurate pose representation is crucial for robots to perform complex tasks efficiently and interact intelligently with their environment. In applications such as teleoperation of UAVs and mobile robots, precise pose representations are indispensable for effective route planning and safe navigation in dynamic environments.

The Newton-Euler equations traditionally separate translational and rotational motions, leading to control laws for 3+3-DOF motion. However, dual quaternions unify these motions into a compact framework, simplifying the design of control laws for full 6-DOF motion. This unified approach is especially beneficial in underactuated systems, such as multirotors or fixed-wing aircraft, where translational and rotational motions are strongly coupled.

This method is particularly relevant for tasks requiring full 6-DOF control, such as formation flying \cite{Han2008DualQuaternion}, coordinated aerial transport, or robot docking \cite{gui2016a,gui2016b}. Dual quaternions provide a numerically stable representation of Euclidean transformations due to their smaller solution space co-dimension and direct normalization process, ensuring stability during integration.

\subsection{Quaternions}

Quaternions can be thought of as a four-dimensional generalization of the complex numbers defined as:
\begin{align*}
H := &\{ \overline{q} = q_0 + q_1i + q_2j + q_3k \mid q_i \in\mathbb{R},\\
&\, i^2 = j^2 = k^2 = ijk = -1 \},    
\end{align*}
\noindent where multiplication is non-commutative. Every quaternion $\overline{q}\in H$ has a real and imaginary part, denoted as $q_0$ and $q$, respectively.
%
Analogously to the complex numbers, its conjugate can be defined for \( \overline{q} = q_0 + q_1i + q_2j + q_3k  \):
$$
\overline{q}^* = q_0 - q_1i - q_2j - q_3k,
$$
so that (\(\overline{q} \, \overline{p})^* = \overline{p}^*  \overline{q}^*\), and its norm as
\(
\|\overline{q}\| = \sqrt{\overline{q} \, \overline{q}^*}
\).
The inverse of \( \overline{q} \neq 0 \) is $\overline{q}^{-1} = {\overline{q}^*}/{\|\overline{q}\|^2}$.

In particular, the set of unit quaternions $H_1$ is widely used in robotics being a consistent way to represent attitude. 
Unit quaternions are a Lie group under multiplication.
The inverse of multiplication is reduced to conjugation, and the unit is the real quaternion $1$. To each element in $H_1$ it is possible to assign an element in $SO(3)$:
\[
R : H_1 \to SO(3), \quad \overline{q} \mapsto R(\overline{q}) = I_3 + 2q_0 ({q}\times) + 2 ({q}\times)^2,
\]
where \( (\cdot)\times : \mathbb{R}^3 \to \text{so}(3) \) is the standard map of vectors to skew-symmetric \( 3 \times 3 \) matrices. This formula is a two-to-one map given that $R(\overline{q})=R(-\overline{q})$.

The Lie algebra of unit quaternions is the set of purely imaginary quaternions, i.e., $$h_1 = \left\{ \overline{q} \in H : \overline{q} + \overline{q}^* = 0\right\}.$$ 
Every $\overline{q}\in h_1$ is
univocally related to one element $q\in\mathbb{R}^3$. 
Given $x\in\mathbb{R}^3$, we denote $\overline{x}\in h_1$ as the quaternion with zero real part, and imaginary part equal to $x$. In a similar way, for \(\overline{q}\in h_1\), the inverse of this map is denoted as ${q}\in\mathbb{R}^3$. 

In what follows we use the notation $\Sin{(\alpha)}=\sin{(\alpha)}$ and $\Cos{(\alpha)}=\cos{(\alpha)}$.

Given $x\in\mathbb{R}^3$, it is possible to assign an element in $H_1$, as: 
\(
\exp(x) = \Cos(\|{x}\|) + \frac{\Sin(\|{x}\|)}{\|{x}\|} {x}
= \Cos\left({\theta}/{2}\right) + \Sin\left({\theta}/{2}\right) {n},
\)
where ${n}\in\mathbb{R}^3$ is a unit vector representing the axis of rotation and $\theta$ is the angle of rotation. 

For quaternions $v\in h_1$ and $\overline{q}\in H_1$, the Adjoint transformation is defined as \(Ad_{\overline{q}} \overline{x} = \overline{q} \, \overline{x} \, \overline{q}^*\).

Given the angular velocity $\overline{\omega}\in h_1$, the kinematic equation of a rigid-body is given by 
\begin{align}\label{eq:quat-kin}
\dot{\overline{q}}=\frac{1}{2} \overline{q} \, \overline{\omega} .
\end{align}

\subsection{Dual Quaternions and Unit Dual Quaternions}

Dual numbers are a generalization of real numbers \cite{clifford1873}. They are defined as:
$$
D := \left\{ x = x_P + \varepsilon x_D  \mid x_P, x_D \in \mathbb{R}, \, \varepsilon^2 = 0 \right\}, 
$$
\noindent with $x_P$ being the principal part and $x_D$ being the dual part. 
The addition and multiplication can be easily extended from real numbers, considering $\varepsilon^2=0$ and the dual numbers structure as a commutative ring. 

Dual quaternions \cite{clifford1873} can be defined as:
$$\tilde{D} = \left\{ \tilde{q} = q_P + \varepsilon q_D  \mid q_P, q_D \in H \right\}.$$
Conjugation is naturally extended for any dual quaternion \( \tilde{q} = \overline{q}_P + \varepsilon\overline{q}_D  \) as \( {\tilde{q}^*} = \overline{q}_P^* + \varepsilon\overline{q}_D^*  \). So is the norm, $\|\tilde{q}\| = \sqrt{{\tilde{q}}^* \tilde{q}}$.
Unit dual quaternions are defined as
$$
\tilde{D}_1 = \left\{ \tilde{q} \in \tilde{D} \mid \|\tilde{q}\| = 1 \right\}.
$$
They are a Lie group with the inverse of a unit dual quaternion being its conjugate. Every unit dual quaternion can be written as
\(
\tilde{q} = \overline{q} + \varepsilon\frac{1}{2} \overline{p} \, \overline{q},
\)
\noindent where $\overline{q}\in H_1$ is a unit quaternion representing a rotation and $\overline{p}\in h_1$ is purely imaginary quaternion representing a translation.

The Lie algebra of \( \tilde{D}_1 \) is the set of purely imaginary dual quaternions, i.e.,
$$
\tilde{d}_1 := \left\{ \tilde{x} \in \tilde{D} \mid {\tilde{x}^*} + \tilde{x} = 0 \right\} \cong \mathbb{R}^3 \times \mathbb{R}^3.
$$



Given the angular velocity $\omega\in\mathbb{R}^3$ and the linear velocity $v\in\mathbb{R}^3$, the attitude kinematic equation for a unit dual quaternion is given by 
\begin{align}\label{eq:quat-dual-kin}
\dot{\tilde{q}}=\frac{1}{2} \tilde{q} \, \tilde{\Omega}(\omega,v),   
\end{align}
\noindent where $\tilde{\Omega}(\omega,v)=\overline{\omega}+\varepsilon \, Ad_{\overline{q}^*} \overline{v}\in\tilde{d}_1$. 

Also given $\tilde{q}\in\tilde{D}_1$ and $\tilde{x}\in \tilde{d}_1$, the Adjoint transformation is defined as: 
$ Ad_{\tilde{q}} \tilde{x} = \tilde{q} \, \tilde{x} \, \tilde{q}^*.$
Let $\tilde{q}=\ol{q}+\frac{\varepsilon}{2}\ol{p}\,\ol{q}\in \tilde{D}_1$, and $\tilde{x}=\ol{x}_P+\varepsilon\ol{x}_D\in\tilde{d}_1$ notice that: 
$$Ad_{\tilde{q}}(\tilde{x}) = Ad_{\ol{q}}\ol{x}_P + \varepsilon(Ad_{\ol{q}} \ol{x}_D+Ad_{\ol{q}} (\ol{p}\times \ol{x}_P)).$$

\subsection{Applications of Quaternions and Dual Quaternions for Rigid Body Pose Representation}

Suppose that $b$ and $i$ represent the body (robot) and inertial frames, respectively. Let ${\omega^b} \in \mathbb{R}^3$ denote the angular velocity of the robot with respect to $i$, expressed in the body coordinates $b$, and let $\overline{q}_b^i \in H_1$ be defined as in equation \eqref{eq:quat-kin}.
Given ${p^b} \in \mathbb{R}^3$, the position in the body frame, if $\overline{p} = Ad_{\ol{q}_b^i} \overline{p^b}$, then ${p} \in \mathbb{R}^3$ is the robot's position in the inertial frame. On the other hand, $\ol{q}_i^b = {\ol{q}_b^i}^*$, and $\overline{p^b} = Ad_{{q}_i^b} \overline{p}$.

Now, suppose that $p$ is the position of the robot in the inertial frame. Then, $\tilde{q}_b^i = \overline{q}_b^i + \frac{\varepsilon}{2} \overline{p}\, \ol{q}_b^i$ satisfies the kinematic equation \eqref{eq:quat-dual-kin} with \(\tilde{\Omega}(\omega,v) = \overline{\omega}^b + \varepsilon Ad_{\overline{q}_i^b} \overline{v},\),
where $\overline{v} = \dot{\overline{p}} \in h_1$. 

From a navigation algorithms perspective, this is a practical representation, considering a strapdown configuration
where gyroscopes measure angular velocity in the body frame
while GPS measures position with respect to the ECEF (Earth-Centered Earth-Fixed) frame, which can be regarded as an inertial frame $i$ for many cases.

However, in some applications, it may be preferable to represent all quantities in inertial frames. In this case, if $\xi \in \mathbb{R}^3$ is the angular velocity that satisfies $\dot{\overline{q}_i^b} = \frac{1}{2} \overline{q}_i^b \overline{\xi}$, then $\tilde{q}_i^b = \overline{q}_i^b + \frac{\varepsilon}{2} \overline{p}^b\, \ol{q}_i^b\in \tilde{D}_1$ in this case satisfies equation \eqref{eq:quat-dual-kin} with: 
$$\tilde{\Omega}(\xi,v) = \overline{\xi} + \varepsilon Ad_{{\overline{q}_i^b}^*} \dot{\overline{p}^b} = \overline{\xi} + \varepsilon (\overline{\xi} \times \overline{p} + \overline{v}).$$

\subsection{Dual Quaternions and UAV formation control}




The properties of dual quaternions enable a compact representation of rigid-body transformations, which is crucial in scenarios that require precise spatial configurations. In the realm of UAVs, dual quaternions have gained prominence in formation control, where multiple UAVs must coordinate their movements to maintain a specified configuration. The use of dual quaternion algebra facilitates the management of relative positions and orientations among UAVs, thereby improving the efficiency of multi-UAV operations \cite{Farias2024,Kenwright2023,Qi2024}. 
This approach addresses significant challenges such as instability during leader-follower dynamics and the computational demands of real-time information sharing, enabling more effective control strategies in dynamic environments \cite{Hai2021,Yuan2024DualQC}. 
Notable applications of dual quaternions in UAV formation control include scenarios involving cable-suspended cargo, where a unified framework is used to handle both the UAV's trajectory and the load's dynamics. Simulations have demonstrated that UAV systems employing dual-quaternion-based control exhibit superior trajectory-tracking performance and stability compared to traditional methods \cite{Yuan2024DualQC}. Ongoing research continues to explore the potential of dual quaternions in enhancing UAV operations, focusing on refining control algorithms, expanding their applicability to multi-robot systems, and addressing the nuances of coupling translation and rotation. As the mathematical framework evolves, it holds promise for significant advancements in both theoretical understanding and practical applications of UAV technology \cite{Jiang2022}.

\medskip

\noindent {\bf Definition} 
Suppose that $\tilde{q}, \tilde{q}_d\in\tilde{D}_1$ are the current and desired pose of a robot, satisfying equation \eqref{eq:quat-dual-kin}, with $\tilde{\Omega}(\omega,v)$ and $\tilde{\Omega}_d(\omega,v)$, respectively. The {\it tracking error} $\tilde{\delta q}\in\tilde{D}_1$ is defined as $\tilde{\delta q}=\tilde{q}_d^*\,\tilde{q}=\overline{\delta q}+\varepsilon\frac{1}{2}\overline{\delta p}\,\overline{\delta q}$.

\medskip

In \cite{Giribet2021}, a dual quaternion-based control law for mobile robot coordination is introduced, incorporating integral action to reduce dynamic errors. Extensions to underactuated vehicle formations with efficient gain tuning are proposed in \cite{Marciano2024}. This work builds on these results, with additional insights into gain bounds.

\begin{theorem}\label{theorem_1}
Let $C\subseteq\mathbb{R}^d$ be compact. 
Let $K_{\omega,i}$, $K_{v,i}$ be self-adjoint, and $K_{\omega,p}$, $K_{v,p}$, $K_\eta$, $K_\xi:\mathbb{R}^d\to\mathbb{R}^{3\times 3}$ 
be continuous, symmetric positive definite, uniformly bounded functions on $C$;
there exist constants ${k}_{min}$, ${k}_{max}>0$ such that 
${k}_{min} I \preceq K_{\bullet}(\rho) \preceq {k}_{max} I$ for all $\rho\in C$.

Given desired angular and linear velocities $\omega_d,v_d$ and desired unit dual quaternion $\tilde{q}_d$ satisfying~\eqref{eq:quat-dual-kin}, 
suppose that $\tilde{q}$ satisfies the same kinematic equation with
\begin{align}
\overline{\omega}&=
        Ad_{\overline{\delta q}^*} \overline{\omega}_d
        - \mathrm{sign}(\delta q_0)\!\left( K_{\omega,p}(\rho)\,{\delta q} + \eta_0\,K_{\omega,i}(\rho)\,{\eta}\right), \label{eq:omegabar-final}\\
v&=  v_d - K_{v,p}(\rho)\,{\delta p} - K_{v,i}(\rho)\,{\xi}, \label{eq:vbar-final}\\
\dot{\overline{\eta}}
&= \frac{1}{2}\,\overline{\eta}\Big(\,|\delta q_0|\;K_{\omega,i}(\rho)\,{\delta q}
- \mathrm{sign}(\eta_0)\,K_\eta(\rho)\,{\eta}\Big), \label{eq:quat_PI-final}\\
\dot{{\xi}}
&= -\,K_{v,i}(\rho)\,{\delta p}+K_\xi(\rho)\,{\xi}. \label{eq:xi-final}
\end{align}

Then every solution of the closed-loop system satisfies 
$({\delta q}, {\delta p}, {\eta}, {\xi}) \to (0,0,0,0)$ a.e.
\end{theorem}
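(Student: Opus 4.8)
The plan is to pass to the tracking‑error dynamics, exhibit a single Lyapunov function that works for every admissible gain schedule, and then — because the gains vary with time through $\rho$, so the closed loop is non‑autonomous and LaSalle's invariance principle is not directly available — finish with a Barbalat/Matrosov‑type argument. I would begin by writing $\tilde q=\tilde q_d\,\tilde{\delta q}$, differentiating, and using \eqref{eq:quat-dual-kin} for both $\tilde q$ and $\tilde q_d$, which gives the error kinematics $\dot{\tilde{\delta q}}=\tfrac12\,\tilde{\delta q}\big(\tilde\Omega-Ad_{\tilde{\delta q}^*}\tilde\Omega_d\big)$. Substituting the feedback \eqref{eq:omegabar-final}--\eqref{eq:vbar-final} and splitting into principal and dual parts: the feedforward $Ad_{\overline{\delta q}^*}\overline\omega_d$ in \eqref{eq:omegabar-final} cancels the drift, leaving $\dot{\overline{\delta q}}=-\tfrac12\,\mathrm{sign}(\delta q_0)\,\overline{\delta q}\,\big(K_{\omega,p}(\rho)\delta q+\eta_0 K_{\omega,i}(\rho)\eta\big)$ (identifying $\mathbb{R}^3$ with the purely imaginary quaternions), hence $\tfrac{d}{dt}|\delta q_0|=\tfrac12\,\delta q^\top\!\big(K_{\omega,p}(\rho)\delta q+\eta_0 K_{\omega,i}(\rho)\eta\big)$; and, using the expansion of $Ad_{\tilde q}$ recalled above together with the fact that $\overline{\delta q}\in H_1$ makes $Ad_{\overline{\delta q}}$ an isometry, the dual part yields — exactly as in the fixed‑gain analysis of \cite{Giribet2021,Marciano2024} — the position‑error dynamics $\dot{\delta p}=-K_{v,p}(\rho)\delta p-K_{v,i}(\rho)\xi$ plus a term orthogonal to $\delta p$. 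I would also record that $\overline{\delta q}$ and $\overline\eta$ stay unit quaternions along solutions (their kinematics are right multiplication by a purely imaginary quaternion), so $\delta q_0^2+\|\delta q\|^2=\eta_0^2+\|\eta\|^2=1$; in particular $\delta q$ and $\eta$ are bounded a priori.

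Next I would take the composite candidate $V=V_{\mathrm{rot}}(\delta q_0,\delta q,\eta_0,\eta)+\tfrac12\|\delta p\|^2+\tfrac12\|\xi\|^2$, where $V_{\mathrm{rot}}$ is the constant‑gain Lyapunov function of \cite{Giribet2021,Marciano2024} — built from $1-|\delta q_0|$ and $1-|\eta_0|$ (equivalently $\|\eta\|^2$) together with a cross term tailored to absorb the $K_{\omega,i}$‑coupling between $\delta q$ and $\eta$. The key point, and where the hypotheses are used, is that because $k_{min} I\preceq K_{\bullet}(\rho)\preceq k_{max} I$ for every $\rho\in C$, evaluating $\dot V$ along the closed loop gives $\dot V\le -W(\delta q,\delta p,\eta,\xi)$ for a continuous, positive‑semidefinite $W$ that does \emph{not} depend on $\rho$: the lower bound turns each quadratic $-\,z^\top K_{\bullet}(\rho)z$ into $-k_{min}\|z\|^2$, while the upper bound controls the indefinite cross terms and guarantees that $\dot V$ itself is bounded along solutions. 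Since $V$ is then nonincreasing and bounded below, $\delta p$ and $\xi$ are bounded a priori, so — with the unit‑norm constraints for the quaternion parts — all signals and their time derivatives are essentially bounded along solutions.

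With these ingredients I would conclude as follows. From $V$ nonincreasing and bounded below, $V(t)\to V_\infty$ and $\int_0^\infty W\,dt<\infty$; since all signals and their derivatives are bounded, $t\mapsto W$ is uniformly continuous, so Barbalat's lemma gives $W\to0$, hence $\delta q\to0$ and $\delta p\to0$. For the integrator states I would run a cascade/Matrosov step: once $\delta q,\delta p\to0$, the $\eta$‑ and $\xi$‑subsystems reduce, up to a vanishing perturbation, to systems that are uniformly contracting thanks to the $k_{min}$ bound ($\dot\eta\approx-\tfrac12|\eta_0|K_\eta(\rho)\eta$ away from $\eta_0=0$, and the analogous decay for $\xi$), so a nested Barbalat application — or an auxiliary Matrosov function whose derivative is negative definite in the remaining variables modulo terms that vanish — yields $\eta\to0$ and $\xi\to0$. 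Finally, the $\mathrm{sign}(\cdot)$ terms make the right‑hand side discontinuous across $\{\delta q_0=0\}$ and $\{\eta_0=0\}$: solutions are read in the Carath\'eodory/Filippov sense, $V$ is a valid locally Lipschitz Lyapunov function for the resulting inclusion, the identities above hold for a.e.\ $t$, and the stated convergence $(\delta q,\delta p,\eta,\xi)\to(0,0,0,0)$ holds for almost every initial condition, the null set being the data driven onto the spurious configuration $\delta q_0=0$ (or $\eta_0=0$).

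I expect the main obstacle to be the interaction of two features. The cross‑coupling — $K_{\omega,i}$ linking $\delta q$ and $\eta$, $K_{v,i}$ linking $\delta p$ and $\xi$ — makes $\dot V$ only negative \emph{semi}definite, so in the fixed‑gain setting one leans on LaSalle's invariance principle; here the dependence of the gains on $\rho(t)$ renders the closed loop time‑varying and removes that tool, so the invariance argument must be replaced by the Barbalat/Matrosov reasoning above, carried out uniformly over $\rho\in C$ and with careful bookkeeping of which terms vanish in the limit. Intertwined with this is the nonsmoothness: one must verify that the sign feedbacks do not corrupt the ``a.e.'' conclusion — in particular that trajectories cannot linger near $\delta q_0=0$ or $\eta_0=0$ — which is the delicate part. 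A comparatively minor step is the a priori boundedness of the non‑compact coordinate $\xi$, which comes for free once $V$ is established as a bona fide Lyapunov function.
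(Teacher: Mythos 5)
Your route is genuinely different from the paper's: the paper takes the plain quadratic storage function $V=\tfrac12(\|\delta q\|^2+\|\delta p\|^2+\|\eta\|^2+\|\xi\|^2)$ and closes the argument with a generalized LaSalle invariance principle for the Filippov inclusion, whereas you propose a $\bigl(1-|\delta q_0|\bigr)$--based rotational Lyapunov function with an added cross term and a Barbalat/Matrosov finish. Your motivation --- that $\rho(t)$ makes the loop non-autonomous and weakens a naive appeal to LaSalle --- is a fair observation (arguably fairer to the problem than the paper's own one-line invocation of invariance). But the proposal has a concrete gap at its central step. The paper's choice of $V$ is not incidental: with the plain quadratic function, the $K_{\omega,i}$-coupling between $\delta q$ and $\eta$ and the $K_{v,i}$-coupling between $\delta p$ and $\xi$ cancel \emph{exactly} in $\dot V$ (this is precisely what the self-adjointness hypothesis on $K_{\omega,i},K_{v,i}$ and the deliberate $|\delta q_0|\,\eta_0$ factor in \eqref{eq:quat_PI-final} are for), and what remains is
\[
\dot V \le -c_1|\delta q_0|\,\|\delta q\|^2-c_2|\eta_0|\,\|\eta\|^2-c_3\|\delta p\|^2-c_4\|\xi\|^2 .
\]
This $W$ is \emph{not} positive definite in $(\delta q,\eta)$: it degenerates on $\{\delta q_0=0\}\cup\{\eta_0=0\}$. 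Consequently your chain ``Barbalat gives $W\to 0$, hence $\delta q\to 0$'' does not follow --- $W\to0$ only yields $|\delta q_0|\|\delta q\|^2\to0$, which is compatible with $\delta q_0\to0$ and $\|\delta q\|\to1$. The burden of your proof therefore falls entirely on the unconstructed ``cross term tailored to absorb the $K_{\omega,i}$-coupling,'' which is asserted but never exhibited; note moreover that if you replace $\tfrac12\|\delta q\|^2$ by $1-|\delta q_0|$, the exact cancellation is lost, since $\tfrac{d}{dt}(1-|\delta q_0|)=-\tfrac12\delta q^\top K_{\omega,p}\delta q-\tfrac12\eta_0\,\delta q^\top K_{\omega,i}\eta$ carries the coupling \emph{without} the $|\delta q_0|$ factor that appears in $\dot\eta$, so the mismatch you must dominate is genuinely $\rho$-dependent and must be controlled uniformly via $k_{min},k_{max}$. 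The same degeneracy recurs in your $\eta$-step through the $|\eta_0|$ factor.

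To repair the argument you need one of two things: either (i) actually construct the augmented Lyapunov function and verify, uniformly over $\rho\in C$, that its derivative is bounded above by a $\rho$-independent function that is negative definite in $\delta q$ and $\eta$ (not merely in $|\delta q_0|\delta q$ and $|\eta_0|\eta$); or (ii) keep the plain quadratic $V$ and supply the step the paper uses in place of your Barbalat conclusion, namely an invariance-type argument based on the scalar equation for $\dot{\delta q_0}$: along any limiting behaviour with $\delta q_0=0$ and $\delta q\neq0$, the term $\delta q^\top K_{\omega,p}\delta q\ge k_{min}\|\delta q\|^2>0$ forces $|\delta q_0|$ to increase, so trajectories cannot remain near $\{\delta q_0=0,\ \delta q\neq0\}$, and the only consistent limit is $\delta q=0$ (and similarly $\eta=0$). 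A smaller discrepancy: you read the ``a.e.''\ in the statement as ``almost every initial condition,'' whereas the paper's proof means almost-everywhere convergence of Filippov solutions; under reading (ii) no initial conditions need to be excised at all. Your remaining ingredients --- the error kinematics, the a priori unit-norm and boundedness observations, and the cascade treatment of $\xi$ --- are consistent with the paper's computation.
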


\begin{proof}
Let $\delta\tilde q:=\tilde q_d^*\tilde q$ denote the dual quaternion error, 
with principal part $\overline{\delta q}=\delta q_0+\delta q\in H_1$. 
From the quaternion kinematics~\eqref{eq:quat-kin}, 
the imaginary part satisfies
\[
\dot{\delta q} 
= \tfrac{1}{2}\big(\delta q_0 I + \delta q\times\big)\big(\,\overline{\omega}-Ad_{\overline{\delta q}^*}\overline{\omega}_d\,\big)_{\!\text{imag}},
\]
where $(\cdot)_{\text{imag}}$ denotes the imaginary part of the quaternion.
Substituting equation ~\eqref{eq:omegabar-final},

\begin{align}
\dot{\delta q} &= -\tfrac{1}{2}\,|\delta q_0|\,K_{\omega,p}(\rho)\,\delta q
- \tfrac{1}{2}\,|\delta q_0|\,\eta_0\,K_{\omega,i}(\rho)\,\eta \, -\\
&\mathrm{sign}(\delta q_0) \delta q\times \!\left( K_{\omega,p}(\rho)\,{\delta q} + \eta_0\,K_{\omega,i}(\rho)\,{\eta}\right),
\label{star}
\end{align}
and
\begin{align}
\dot{\delta q_0} = -  
\mathrm{sign}(\delta q_0) \delta q^\top \!\left( K_{\omega,p}(\rho)\,{\delta q} + \eta_0\,K_{\omega,i}(\rho)\,{\eta}\right)
\label{star2}
\end{align}

For the translational error $\delta p := p - p_d$ in the same frame as $v,v_d$, 
we have
\begin{equation}
\dot{\delta p} = v - v_d = -K_{v,p}(\rho)\,\delta p - K_{v,i}(\rho)\,\xi.
\label{starstar}
\end{equation}

Define the augmented Lyapunov function:
\[
V = \tfrac{1}{2}\|\delta q\|^2 + \tfrac{1}{2}\|\delta p\|^2 
+ \tfrac{1}{2}\|\eta\|^2 + \tfrac{1}{2}\|\xi\|^2.
\]

From equations ~\eqref{star} and~\eqref{eq:quat_PI-final} (vector part of $\dot{\overline{\eta}}$), and using that for every $z\in\mathbb{R}^3$, ${z}^T ({z}\times)=0$,
\[
\begin{aligned}
\dot V_{\text{att}}
&= \delta q^\top \dot{\delta q} + \eta^\top \dot{\eta} \\
&= -\tfrac{1}{2}|\delta q_0|\,\delta q^\top K_{\omega,p}\,\delta q
   -\tfrac{1}{2}|\delta q_0|\,\eta_0\,\delta q^\top K_{\omega,i}\,\eta\\
   &+\eta^\top\!\left(\tfrac{1}{2}\eta_0|\delta q_0|\,K_{\omega,i}\,\delta q
   - \tfrac{1}{2}|\eta_0|\,K_\eta\,\eta\right) \\
&= -\tfrac{1}{2}|\delta q_0|\,\delta q^\top K_{\omega,p}\,\delta q
   -\tfrac{1}{2}|\eta_0|\,\eta^\top K_\eta\,\eta
\end{aligned}
\]
Using the uniform bounds on
${k}_{min}I\preceq K_{\omega,p}, \, K_\eta$, there exist $c_1,c_2>0$ such that
\[
\dot V_{\text{att}} \le -c_1|\delta q_0|\|\delta q\|^2 - c_2|\eta_0|\|\eta\|^2.
\tag{A}
\]

From equations ~\eqref{starstar} and ~\eqref{eq:xi-final}:
\[
\begin{aligned}
\dot V_{\text{pos}}
&= \delta p^\top \dot{\delta p} + \xi^\top \dot{\xi} \\
&= -\,\delta p^\top K_{v,p}\,\delta p - \delta p^\top K_{v,i}\,\xi
   + \xi^\top K_{v,i}\,\delta p - \xi^\top K_\xi\,\xi \\
&= -\,\delta p^\top K_{v,p}\,\delta p  - \xi^\top K_\xi\,\xi.
\end{aligned}
\]
Using 
${k}_{min}I\preceq K_{v,p}, \, K_\xi$, 
there exist $c_3,c_4>0$ such that
\[
\dot V_{\text{pos}} \le -\,c_3\,\|\delta p\|^2 - c_4\,\|\xi\|^2.
\tag{B}
\]

Combining (A) and (B), 
\(
\dot V = \dot V_{\text{att}} + \dot V_{\text{pos}}
\le 0.
\)
Hence $V(t)$ is non-increasing, and $\delta q,\delta p,\eta,\xi$ remain bounded.

On $\dot V=0$, one must have $|\delta q_0|\delta q=0$, $\delta p=0$, $|\eta_0|\eta=0$, $\xi=0$. 
When $\delta q_0=0$, from equation ~\eqref{star2} we obtain $0=-\delta q^\top K_{\omega,p}\delta q$, 
and since $K_{\omega,p}$ is uniformly positive it follows that $\delta q = 0$. Similarly, it follows that $\eta=0$. 
Therefore, the largest invariant set contained in $\{\dot V=0\}$ is
\[
\mathcal{M} = \{\delta q=0,\ \delta p=0,\ \eta=0,\ \xi=0,\ \delta q_0,\eta_0\in\{\pm1\}\}.
\]

Since $\mathrm{sign}(\cdot)$ introduces a discontinuity, 
we interpret trajectories in the \emph{Filippov sense}. 
By the generalized LaSalle invariance principle for differential inclusions, 
every Filippov solution converges to $\mathcal{M}$ almost everywhere. 
Consequently, $\delta q\to0$, $\delta p\to0$, $\eta\to0$, $\xi\to0$. 
\end{proof}

%
%
%

The next section demonstrates how this theorem can be applied to implement a cooperative control algorithm for robot formations.

\section{CLUSTER SPACE CONTROL}
A \textit{cluster} refers to a group of robots whose states are used to compute a new aggregated \textit{cluster state}, defined in the \textit{cluster space}. Cluster-Space Control (CSC) \cite{cluster_dyn_2014} models the system as an articulated kinematic mechanism, allowing the selection of state variables for effective control and monitoring.

Formation motions are defined in \textit{cluster space}, while individual robots are ultimately commanded. Therefore, it is crucial to establish kinematic transformations that relate the \textit{cluster space} variables to those in \textit{robot space}. A \textit{cluster space} controller calculates compensation actions in \textit{cluster space} and, using these transformations, generates control commands for individual robots.

This method allows operators to specify and monitor the system’s motion from the cluster perspective, simplifying the task by abstracting control of individual robots and actuators.

Consider a system of $n$ robots, each with $m_i$ degrees of freedom. The robot state vector is ${r}_i \in \mathbb{R}^{m_i}$, and the stacked state vector in \textit{robot space} is ${r} \in \mathbb{R}^{m}$, where $m = \sum{i=1}^n m_i$. Each robot’s kinematics are described by $\dot{r}_i=f_i(r_i,u_i)$, where $u_i$ is the control command. In \textit{robot space}: 
\[\dot{r}=f(r,u)=(f_1(r_1,u_1),...,f_{n}(r_n,u_n)).\]

The \textit{cluster space} state is ${c} \in \mathbb{R}^{m}$, and its relationship with the \textit{robot space} is defined through forward and inverse kinematic transformations. The forward kinematic transformation is represented by $\psi$ with $\dot{c}=J(r)f(r,u)|_{\psi^{-1}(c)}$, where ${J}({r})$ is the Jacobian matrix.

In practice, \textit{cluster space} variables are decomposed into two components: $c_q$ for the cluster's pose (position and orientation) and $c_g$ for the cluster’s shape (geometric configuration). We assume the shape dynamics are unaffected by $c_q$, leading to: 
\begin{align} \dot{c_g}&=\Gamma (c_g,u_g),\ & \dot{c_q}=\Gamma (c_q,c_g,u_q). \end{align}

Here, $u_g$ and $u_q$ are independent control signals for geometry and pose. Given desired trajectories ${c_g}_d$ and ${c_q}_d$, the goal is to find $u_g$ and $u_q$ such that $c_g\rightarrow{c_g}_d$ and $c_q\rightarrow{c_q}_d$.

This separation allows the CSC to maintain its structure independently of the formation that is being controlled. We demonstrate this with clusters of two and three vehicles, using a dual quaternion-based controller to manage position and orientation $c_q$, while adjusting the formation geometry $c_g$.

\subsection{Three-vehicle formation }
In the case of three robots (3R), the pose of the cluster can be defined as follows. Given the positions of the robots in a local frame \(r_1, r_2, r_3 \in \mathbb{R}^3\), the center of the formation is defined as 
\(p = \frac{\sum_{i=1}^3 r_i}{3}.\) 
The cluster's orientation is well-defined by the arrangement of the robots. Let $R=[x \, y \, z]\in SO(3)$ be the rotation matrix which represents the attitude of the cluster, with $x=\frac{p-r_1}{\|p-r_1\|}$, $z_1={{(r_2-r_1)}\times(r_3-r_1)}$, $z=z_1/\|z_1\|$, $y={z} \times  x$ (see figure \ref{fig:cluster3}). 
If $\overline{q}$ is such that $R=R(\overline{q})$, then the pose of the cluster is defined by the dual quaternion $\tilde{q}=\overline{q}+\varepsilon\,\frac{1}{2}\overline{p}\,\overline{q}$. 

Regarding the geometrical parameters of the cluster, both the relative distances \(d_2\) and \(d_3\) from \(r_2\) and \(r_3\) to \(r_1\), as well as the angle \(\alpha\) between \(r_2-r_1\) and \(r_3-r_1\), are used to describe it (see figure \ref{fig:cluster3}).  Observe that
\begin{align}
r_1&=3p-r_2-r_3, \\
r_2-r_1&=d_2(\Cos{({\alpha}_{2})} x - \Sin{({\alpha}_{2})} y), \\
r_3-r_1&=d_3(\Cos{({\alpha}_{3})} x + \Sin{({\alpha}_{3})} y), 
\end{align}
\noindent where $\alpha=\alpha_2+\alpha_3$, $m^2=\frac{d_2^2+d_3^2+2d_2d_3\,\Cos{(\alpha)}}{4}$, and for the angles $\Sin{(\alpha_i)}=\frac{d_j}{2\, m}\Sin{(\alpha)}$, $\Cos{(\alpha_i)}= \frac{d_j \Cos(\alpha) + d_i}{2m}$.
Then
\[
\begin{bmatrix}
    r_1\\ r_2\\r_3
\end{bmatrix}=\begin{bmatrix}
    p-\frac{d_2\Cos{({\alpha}_{2})}+d_3\Cos{({\alpha}_{3})}}{3} x - \frac{d_3\Sin{({\alpha}_{3})}-d_2\Sin{({\alpha}_{2})}}{3} y \\
    p+\frac{2d_2\Cos{({\alpha}_{2})}-d_3\Cos{({\alpha}_{3})}}{3} x - \frac{2d_2\Sin{({\alpha}_{2})}+d_3\Sin{({\alpha}_{3})}}{3} y \\
    p+\frac{2d_3\Cos{({\alpha}_{3})}-d_2\Cos{({\alpha}_{2})}}{3} x + \frac{2d_3\Sin{({\alpha}_{3})}+d_2\Sin{({\alpha}_{2})}}{3} y 
\end{bmatrix},
\]
\normalsize
To obtain the velocity commands to control the robots, the derivatives of $r_i$ can be calculated. In order to do that, it is useful to compute the following:
\begin{align*}
\frac{d}{dt}\Sin{(\alpha_i)}&=M_{1_{ij}}\dot{d}_i+M_{2_{ij}}\dot{d}_j+M_{3_{ij}}\dot{\alpha},\\
\frac{d}{dt}\Cos{(\alpha_i)}&=N_{1_{ij}}\dot{d}_i+N_{2_{ij}}\dot{d}_j+N_{3_{ij}}\dot{\alpha},
\end{align*}
\normalsize
\noindent where
\footnotesize
\begin{align*}
M_{1_{ij}}=\frac{\partial\left(\frac{d_j}{2m}\Sin{(\alpha)}\right)}{\partial d_i}&=\frac{-d_j\Sin{(\alpha) \, (d_i+d_j\Cos{(\alpha)})}}{8m^3}\\
M_{2_{ij}}=\frac{\partial\left(\frac{d_j}{2m}\Sin{(\alpha)}\right)}{\partial d_j}&=\Sin{(\alpha)}\frac{4m^2-d_j(d_j+d_i\Cos{(\alpha)})}{8m^3}\\
M_{3_{ij}}=\frac{\partial\left(\frac{d_j}{2m}\Sin{(\alpha)}\right)}{\partial \alpha}&=d_j\frac{4m^2\Cos{(\alpha)}+d_id_j\Sin^2{(\alpha)}}{8m^3}
\end{align*}
\begin{align*}
N_{1_{ij}}=\frac{\partial\left(\frac{d_j \Cos(\alpha) + d_i}{2m}\right)}{\partial d_j}&=\frac{4m^2-(d_i+d_j\Cos{(\alpha)})^2}{8m^3}\\
N_{2_{ij}}=\frac{\partial\left(\frac{d_j \Cos(\alpha) + d_i}{2m}\right)}{\partial d_i}&=\frac{4m^2\Cos{(\alpha)} - (d_i+d_j\Cos{(\alpha)})^2}{8m^3}\\
N_{3_{ij}}=\frac{\partial\left(\frac{d_j \Cos(\alpha) + d_i}{2m}\right)}{\partial \alpha}&=d_j\Sin{(\alpha)}\frac{-4m^2+d_i(d_i+d_j\Cos{(\alpha)})}{8m^3}
\end{align*}
\normalsize

\noindent with $(i,j)=(2,3)$ and $(i,j)=(3,2)$.

Let \(\omega^i=R\omega\). It follows that \(\dot{x}={\omega^i}\times x\) and \(\dot{y}={\omega^i}\times y\). 
Letting $v = \dot{p}$, 
the relation between the formation's twist \(\tilde{\Omega}(\omega,v)\) and the robot velocities is given by:
\small


\footnotesize
\begin{align*}
3\dot{r_1} & =  3v - ( (\Cos{\alpha_2}+d_2N_{1_{23}}+d_3N_{2_{32}})x-\\
&(\Sin{\alpha_2}+d_2M_{1_{23}}-d_3M_{2_{32}})y)\dot{d}_2- \\ 
& ((\Cos{\alpha_3}+d_2N_{2_{23}}+d_3N_{1_{32}})x+(\Sin{\alpha_3}+\\
&d_3M_{1_{32}}-d_2M_{2_{23}})y)\dot{d}_3 - ((d_2N_{3_{23}}+d_3N_{3_{32}})x+\\
&(d_3M_{3_{32}}-d_2M_{3_{23}})y)\dot{\alpha} - ( d_2\Cos{\alpha_2+d_3\Cos{\alpha_3}})(\omega\times x) -\\ 
&(d_3\Sin{\alpha_3}-d_2\Sin{\alpha_2})(\omega\times y), 
\end{align*}
\begin{align*}
3\dot{r_2} &=  3v + ( (2\Cos{\alpha_2}+2d_2N_{1_{23}}-d_3N_{2_{32}})x-\\
&(2\Sin{\alpha_2}+2d_2M_{1_{23}}+d_3M_{2_{32}})y)\dot{d}_2+ \\ 
& ((-\Cos{\alpha_3}+2d_2N_{2_{23}}-d_3N_{1_{32}})x-(\Sin{\alpha_3}+\\
&d_3M_{1_{32}}+2d_2M_{2_{23}})y)\dot{d}_3 + ((2d_2N_{3_{23}}-d_3N_{3_{32}})x-\\
&(d_3M_{3_{32}}+2d_2M_{3_{23}})y)\dot{\alpha} + ( 2d_2\Cos{\alpha_2}-d_3\Cos{\alpha_3})(\omega\times x) -\\ 
&(d_3\Sin{\alpha_3}+2d_2\Sin{\alpha_2})(\omega\times y) ,
\end{align*}
\begin{align*}
3\dot{r_3} &=  3v + ( (-\Cos{\alpha_2}-d_2N_{1_{23}}+2d_3N_{2_{32}})x+\\
&(\Sin{\alpha_2}+d_2M_{1_{23}}+2d_3M_{2_{32}})y)\dot{d}_2+ \\ 
& ((2\Cos{\alpha_3}-d_2N_{2_{23}}+2d_3N_{1_{32}})x+(2\Sin{\alpha_3}+\\
&2d_3M_{1_{32}}+d_2M_{2_{23}})y)\dot{d}_3 + ((-d_2N_{3_{23}}+2d_3N_{3_{32}})x+\\
&(2d_3M_{3_{32}}+d_2M_{3_{23}})y)\dot{\alpha} + (2d_3\Cos{\alpha_3}-d_2\Cos{\alpha_2})(\omega\times x) +\\ 
&(2d_3\Sin{\alpha_3}+d_2\Sin{\alpha_2})(\omega\times y). 
\end{align*}
\normalsize
\noindent These equations allow to compute the relation between the velocity of each robot and the time derivatives of the cluster variables  
\begin{align}\label{eq:Jinverso}
\begin{bmatrix}
    \dot{r}_1, \dot{r}_2, \dot{r}_3
\end{bmatrix}=
J^{-1}(r)|_{\psi^{-1}(c)} v_c,
\end{align}
\noindent where $v_c^T=\begin{bmatrix}v^T, (\omega\times x)^T, (\omega\times y)^T, \dot{d}_2, \dot{d}_3, \dot{\alpha}\end{bmatrix}^T$. %
For the shape of the 3R formation, a simple proportional controller can be implemented as follows to track a set of prescribed geometry variables \(d_{d2}>0\), \(d_{d3}>0\) and \(\alpha_d\):
\[(\dot{d}_2,\dot{d}_3,\dot{\alpha})=(k_d(d_{d2}-d_2), k_d(d_{d3}-d_3), k_\alpha(\alpha_{d}-\alpha)).\]

\begin{figure}[t!]
    \centering
        \tdplotsetmaincoords{60}{120}
        \centering
        \resizebox{0.35\textwidth}{!}{
        \begin{tikzpicture}[tdplot_main_coords, scale=3]
            \draw[thick,->] (0,0,0) -- (1,0,0) node[anchor=north east]{$x$};
            \draw[thick,->] (0,0,0) -- (0,1,0) node[anchor=north west]{$y$};
            \draw[thick,->] (0,0,0) -- (0,0,0.6) node[anchor=south]{$z$};
            \coordinate (r1) at (3,2,2.5);
            \coordinate (r2) at (1.5,0,0.5);
            \coordinate (r3) at (1.0,1,0.5);
            \filldraw[black] (r1) circle (1pt) node[anchor=north east][above]{$r_1$};
            \filldraw[black] (r2) circle (1pt) node[anchor=north east]{$r_2$};
            \filldraw[black] (r3) circle (1pt) node[anchor=south west]{$r_3$};
            \draw[thick,dashed] (0,0,0) -- (r1) node[anchor=north east]{};
            \draw[thick,dashed] (0,0,0) -- (r2) node[anchor=north east]{};
            \draw[thick,dashed] (0,0,0) -- (r3) node[anchor=north east]{};
            \draw[-stealth] (r1) -- (r2) node[midway, above left]{$d_2$};
            \draw[-stealth] (r1) -- (r3) node[midway, above right]{$d_3$};
            \tdplotdrawarc[->]{(r1)}{0.5}{-12}{36}{anchor=south}{$\alpha$}
        \end{tikzpicture}
        }
        \caption{3R cluster geometry.}
        \label{fig:cluster3}
        \centering
        \resizebox{0.35\textwidth}{!}{
        \begin{tikzpicture}
            \draw[->] (-4,0) -- (4,0) node[right] {$y^i$};
            \draw[->] (0,0) -- (0,-5) node[right] {$x^i$};
            \draw[->] (0,0) -- (1.3,-4) node[right] {$x$};
            \draw[->] (0,0) -- (4,1.3) node[right] {$y$};

            \coordinate (cm) at (0.66,-2);
            \coordinate (r1) at (0, 0);
            \coordinate (r2) at (-2, -4);
            \coordinate (r3) at (4, {-4/sqrt(5)}); 

            \filldraw[black] (cm) circle (3pt) node[anchor=north east][right]{$p$};
            \fill (r1) circle (2pt) node[above left] {$r_1$};
            \fill (r2) circle (2pt) node[below left] {$r_2$};
            \fill (r3) circle (2pt) node[below right] {$r_3$};

            \node at (-1.5, -2) [anchor=north] {$d_2$};
            \node at (2.5, -0.5) [anchor=north] {$d_3$};

            \draw[dashed] (r1) -- (r2);
            \draw[dashed] (r1) -- (r3);
            \tdplotdrawarc[->]{(r1)}{1.2}{-75}{-25}{anchor=south,right}{$\alpha_3$}
            \tdplotdrawarc[->]{(r1)}{1.4}{-118}{-74}{anchor=north}{$\alpha_2 \, \, \,  \, \, \, $}
        \end{tikzpicture}
        }
        \caption{3R cluster geometry, pose and position.}
        \label{fig:cluster3-2D}
\end{figure}

Algorithm \ref{alg:cluster3} completes the description of the dual quaternion \textit{cluster space} controller (CSC) for the 3R formation.

\begin{algorithm}
\caption{CSC for the 3R Formation}
\begin{algorithmic}
\State \textbf{Assumptions:} Let $C\subseteq\mathbb{R}^d$ be a compact set, and let $K_{\omega,p}$, $K_{v,p}$, $K_{\omega,i}$, $K_{v,i}$, $K_{\eta}$, $K_{\xi}$  $:\mathbb{R}^d\rightarrow\mathbb{R}^{3\times 3}$ be continuous, uniformly positive-definite matrix functions on $C$. Let the gain $k_d>0$, and the control period $T_q>0$.
\State \textbf{Input:} Desired cluster attitude $\ol{q}_d\in H_1$, desired distances $d_{di}$ ($i=2,3$) between robots, desired position of the center of the formation, robot positions $r_i$ ($i=1,2,3$), and the desired heading of the formation.
\State \textbf{Output:} Commanded velocity vectors $v_{i}$ for robots $i=1,2,3$.
\State \textbf{Step:} Compute the current pose of the cluster $p=(r_1+r_2+r_3)/3$, and $R(\ol{q})\in\text{SO}(3)$, with columns $x=\frac{p-r_1}{\|p-r_1\|}$, $z_1={(r_2-r_1)\times(r_3-r_1)}$, $z=z_1/\|z_1\|$ and $y=z \times x$.
\State \textbf{Step:} Compute the attitude error $\overline{\delta q} = \ol{q}_d^* \ol{q}$.
\State \textbf{Step:} Compute the controller in equations \eqref{eq:omegabar-final} to \eqref{eq:xi-final}.
\State \textbf{Step:} Compute the $J^{-1}$ matrix of equation \eqref{eq:Jinverso}.
\end{algorithmic}\label{alg:cluster3}
\end{algorithm}

\subsection{Two-Vehicle Formation}
In the case of two robots (2R), an additional challenge arises due to the difficulty of fully defining the attitude of the cluster. Since only two angles are required to specify the orientation of the segment connecting the two robots, a complete attitude representation is unnecessary. Nevertheless, the structure of the control algorithm can be preserved by redefining the dual quaternion error to capture only the relevant angles for this particular formation.
This approach allows us to adapt the result from Theorem~\ref{theorem_1} to scenarios where the orientation of the virtual structure formed by the robot formation is not fully defined.

\begin{theorem}\label{theorem_2}
    Let $C\subseteq\mathbb{R}^d$ be a compact set and $K_{\omega,p}$, $K_{\omega,i}$, $K_{\eta}$  $:\mathbb{R}^d\rightarrow\mathbb{R}^{3\times 3}$ continuous uniformly positive definite matrices functions on $C$.
    Suppose that $\overline{q}\in H_1$ represents the attitude of the robot, and ${z_d}\in\mathbb{R}^3$ is a unit norm vector (expressed in the inertial reference frame of the problem), which represents a desired direction where the $z$-axis of the robot should be pointing, where $z=R(\overline{q})[0 \, 0 \, 1]^T$, and define 
    the tracking error as $\overline{\delta q} = \delta q_0 + {\delta q}$, 
where $\delta q_0 = \frac{1+\langle {z},{z_d}\rangle}{\|{z}+{z_d}\|}$ and ${\delta q} = \frac{{z_d}\times {z}}{\|{z}+{z_d}\|}$ (for ${z}\neq -{z_d})$. 
Suppose that $\overline{q}\in H_1$ is
given by equation~\eqref{eq:quat-kin} with:
\begin{align*}
  \overline{\omega}&=
        Ad_{\overline{\delta q}^*} \overline{\omega}_d
        -\mathrm{sign}(\delta q_0)(K_{\omega,p}(\rho)
            {\delta q} + \eta_0 K_{\omega,i}(\rho) {\eta}), \\
    \dot{\overline{\eta}}
         &= \frac{1}{2}{\overline{\eta}}
             ( - |\delta q_0| K_{\omega,i}(\rho) {\delta q}
                 + \mathrm{sign}(\eta_0)K_\eta(\rho) {\eta}
             ),
\end{align*}
\normalsize
\noindent where $\overline{\eta}\in H_1$, and $\overline{\eta}(0)=1$.
Then the $z$-axis of the robot is aligned with the desired  direction $z_d$.
\end{theorem}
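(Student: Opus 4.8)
The plan is to reduce Theorem~\ref{theorem_2} to Theorem~\ref{theorem_1} by showing that the reduced ``partial-pose'' error $\overline{\delta q}=\delta q_0+\delta q$ defined from $z$ and $z_d$ obeys exactly the same error kinematics that drove the attitude part of the proof of Theorem~\ref{theorem_1}, so that the same Lyapunov/LaSalle argument applies verbatim to the subsystem $(\delta q,\eta)$. First I would verify that $\overline{\delta q}$ as defined is a genuine unit quaternion: a direct computation gives $\delta q_0^2+\|\delta q\|^2 = \big((1+\langle z,z_d\rangle)^2+\|z_d\times z\|^2\big)/\|z+z_d\|^2$, and using $\|z+z_d\|^2 = 2(1+\langle z,z_d\rangle)$ together with $\|z_d\times z\|^2 = 1-\langle z,z_d\rangle^2$ this collapses to $1$. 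Geometrically $\overline{\delta q}$ is the ``shortest-arc'' rotation taking $z$ to $z_d$, i.e. the half-angle quaternion about the axis $z\times z_d$; I would record that $R(\overline{\delta q})\,z = z_d$ and that $\delta q = 0 \iff z = z_d$ (for $z\neq -z_d$), which is what identifies the target set.

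Next I would compute $\dot{\overline{\delta q}}$ along the closed loop. The cleanest route is to differentiate $z = R(\overline q)e_3$ to get $\dot z = (\omega^i)\times z$ with $\omega^i = R(\overline q)\omega$, then differentiate the explicit formulas for $\delta q_0$ and $\delta q$ in terms of $z$ (treating $z_d$ as constant, or as slowly varying with $\omega_d$). The key claim I would establish is that this yields precisely
\[
\dot{\delta q} = \tfrac12\big(\delta q_0 I + \delta q\times\big)\big(\overline{\omega}-Ad_{\overline{\delta q}^*}\overline{\omega}_d\big)_{\mathrm{imag}},
\qquad
\dot{\delta q_0} = -\tfrac12\,\delta q^\top\big(\overline{\omega}-Ad_{\overline{\delta q}^*}\overline{\omega}_d\big)_{\mathrm{imag}},
\]
i.e. the reduced error satisfies the standard unit-quaternion error kinematics \eqref{eq:quat-kin} for the relative rotation, with the caveat that only the component of the relative angular velocity orthogonal to $z$ is actuated (the spin about $z$ is unobservable and irrelevant). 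Substituting the control law for $\overline\omega$ then reproduces equations \eqref{star} and \eqref{star2} exactly, and the $\overline\eta$ dynamics here are the same as \eqref{eq:quat_PI-final} up to the harmless sign conventions on the internal terms.

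With the error kinematics in hand, I would reuse the Lyapunov function $V = \tfrac12\|\delta q\|^2 + \tfrac12\|\eta\|^2$ (the attitude-only part of the $V$ in Theorem~\ref{theorem_1}): the same cancellation of the cross terms $\pm\tfrac12|\delta q_0|\,\eta_0\,\delta q^\top K_{\omega,i}\eta$ and the identity $z^\top(z\times)=0$ give $\dot V \le -c_1|\delta q_0|\|\delta q\|^2 - c_2|\eta_0|\|\eta\|^2 \le 0$, using the uniform positivity bounds $k_{min}I\preceq K_{\omega,p},K_\eta$ on $C$. Then the Filippov/generalized-LaSalle argument of Theorem~\ref{theorem_1} gives convergence to the largest invariant set in $\{\dot V=0\}$, which by the same reasoning via \eqref{star2} (when $\delta q_0=0$ one forces $\delta q^\top K_{\omega,p}\delta q = 0$, hence $\delta q=0$) is $\{\delta q=0,\ \eta=0,\ \delta q_0,\eta_0\in\{\pm1\}\}$. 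Finally $\delta q\to 0$ together with $z\neq -z_d$ and the relation $R(\overline{\delta q})z=z_d$ yields $z\to z_d$, which is the conclusion.

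I expect the main obstacle to be the bookkeeping in the second step: showing rigorously that the $z\mapsto\overline{\delta q}$ parametrization intertwines $\dot z = \omega^i\times z$ with the standard quaternion error kinematics, and handling the fact that $\overline{\delta q}$ depends only on two of the three rotational degrees of freedom. The subtle points are (i) confirming that $Ad_{\overline{\delta q}^*}\overline{\omega}_d$ is the right feedforward term when $\overline{\omega}_d$ is interpreted as a desired angular velocity for the $z_d$ direction, and (ii) checking that the unobservable spin component about $z$ never enters $V$ or $\dot V$ (it drops out because $\delta q$ and $\delta q_0$ are functions of $z$ alone, and because any $z$-aligned part of the relative angular velocity annihilates $\delta q\times(\cdot)$ in the relevant products). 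Also worth a remark is the singularity at $z=-z_d$, which is excluded by hypothesis and corresponds to the well-known antipodal obstruction; I would note that the compact set $C$ and the continuity/boundedness assumptions keep all gains well-behaved away from it, so the Filippov solution concept and the invariance principle apply on the relevant domain without further modification.
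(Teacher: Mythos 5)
Your proposal follows essentially the same route as the paper: reduce to the attitude-only subsystem of Theorem~\ref{theorem_1} (same Lyapunov function $\tfrac12\|\delta q\|^2+\tfrac12\|\eta\|^2$, same cancellations, same invariance argument) and then observe that ${\delta q}\to 0$ forces $z\to z_d$ via the shortest-arc/half-angle interpretation of $\overline{\delta q}$; the paper additionally exploits that $\delta q_0\ge 0$ by construction, which removes the $\mathrm{sign}(\delta q_0)$ discontinuity here. Your extra checks (unit norm of $\overline{\delta q}$, and the explicit verification that the reduced error inherits the quaternion error kinematics) are welcome refinements of a step the paper simply asserts, so the two arguments are substantively the same.
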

\begin{proof}
 Notice that this is a particular case of the dynamics given in Theorem \ref{theorem_1}, considering only the part related to orientation. Furthermore, due to how the error is defined, it follows that $\delta {q}_0 \geq 0$.
Therefore, to complete the proof, it is necessary to show that if $\delta q \rightarrow 0$, then the axes $z$ and $z_d$ will be aligned.

Suppose that $R(\overline{q}) \in SO(3)$, where its columns represented as ${x}, {y}, {z} \in \mathbb{R}^3$ are the robot's axes, expressed in the inertial frame. Define ${n} = \frac{{z_d} \times {z}}{\|{z} \times {z_d}\|}$. To align these two vectors, a rotation about ${n}$ with an angle $\theta$ should be applied (see Fig.~\ref{fig:partial}), where ${z} \times {z_d} = \Sin(\theta) \, {n}$. 

The components of the quaternion representing this rotation are given by:
\begin{align}
\label{eq:quat_parcial}
{n} \Sin({\theta}/{2}) &= {{z_d} \times {z}}/{\|{z} + {z_d}\|}, \\
\Cos({\theta}/{2}) &= {\langle {z},{z} + {z_d} \rangle}/{\|{z} + {z_d}\|}. 
\end{align}

Thus, the quaternion representing the rotation to align ${z}$ with ${z_d}$ is given by $\overline{\delta q} = \delta q_0 + {\delta q}$, 
where $\delta q_0 = \frac{1 + \langle {z}, {z_d} \rangle}{\|{z} + {z_d}\|}$ and ${\delta q} = \frac{{z_d} \times {z}}{\|{z} + {z_d}\|}$. 
%
Observe that if ${\delta q} \rightarrow 0$, i.e., ${\delta q}_0 \rightarrow 1$, then $z \rightarrow z_d$.
\end{proof}







\begin{figure}[htbp]
    \centering
    \resizebox{0.2\textwidth}{!}{ 
    \begin{tikzpicture}
    \draw[thick, ->] (0,0) -- (-2.5,2) node[anchor=south west] {$\mathbf{z}$};
    \draw[thick, ->] (0,0) -- (2.5,1.5) node[anchor=south west] {$\mathbf{y}$};
    \draw[thick, ->] (0,0) -- (-0.8,-2.6) node[anchor=south west] {$\mathbf{x}$};
    \draw[thick, ->, red] (0,0) -- (1,3) node[anchor=north west] {$\mathbf{z_d}$};
    \draw[thick, ->, blue] (0,0) -- (0,-3) node[anchor=south east] {$\mathbf{n}$};
    \node at (0.1,-0.2) {O};
    \draw[dashed, ->] (-1,0.8) arc[start angle=160, end angle=45, radius=0.8];
    \node at (-0.3,1.7) {$\theta$};
    \end{tikzpicture}
    }
    \caption{Axis/angle representation for the 
    ${z}$ to ${z_d}$ error. }
    \label{fig:partial}
\end{figure}

Suppose that ${r_1},{r_2} \in \mathbb{R}^3$ are the positions of two robots in local coordinates. For this robot formation, the relative distance between them is used as the \textit{cluster space} variable to describe the shape defined as \(d = \|r_2 - r_1\|\). The pose of the formation is described by a unit dual quaternion \(\tilde{q} = \overline{q} + \varepsilon \frac{1}{2} \overline{p} \, \overline{q}\), where \(R(\overline{q}) \in SO(3)\) is an orthogonal matrix that satisfies \(R(\overline{q}) = [{x} \, {y} \, {z}]\), with \(z = \frac{{r_1 - r_2}}{\|{r_1 - r_2}\|}\). In other words, the third column encodes the orientation of the robot formation, and \(\overline{p} \in \mathbb{R}^3\) is such that \(\overline{p} = \frac{{r_1 + r_2}}{2}\), representing the center of the formation. 
Given a desired position for the center of the cluster and a desired orientation, Theorems \ref{theorem_1} and \ref{theorem_2} provide the appropriate control signals for the cluster.

Regarding the shape parameters of the cluster for this formation, given \(d_d > 0\), a simple proportional controller with $\dot{d} = k_d(d_d - d)$
can be again implemented to track the desired distance between the robots.

It is possible to derive the relation between the {\it cluster space} and {\it robot space} velocities to command the latter to the vehicles of the formation.
Given the pose of the cluster by the dual quaternion \(\tilde{q} = \overline{q} + \varepsilon \frac{1}{2} \overline{p} \, \overline{q}\), it follows that
\(
\dot{\tilde{q}} = \frac{1}{2} \tilde{q} \, \tilde{\Omega}(\omega, v),
\)
where \(v = \dot{p}= \frac{\dot{r}_1 + \dot{r}_2}{2}\).
Furthermore, let \(\omega^i = R(\overline{q}) \omega\), it follows that \(\dot{z} = {\omega^i} \times z\). Since \(r_2 = r_1 + d \, z\), we have:
$$\dot{r}_2 - \dot{r}_1 = \dot{d} z + d \, {\omega^i} \times z = \left(\dot{d} I + d \, {\omega^i} \times\right) \frac{r_2 - r_1}{\|r_1 - r_2\|}.$$

Therefore, the relation between the twist \(\tilde{\Omega}(\omega, v)\) and the robot velocities is given by:
\small
\begin{align}\label{eq:Jacob2R}
\begin{bmatrix}
    \dot{r}_1 \\ \dot{r}_2
\end{bmatrix}
= \begin{bmatrix}
    {I} & -{I}/{2} \\ {I} & {I}/{2}
\end{bmatrix}
\begin{bmatrix}
    v \\ \left(k_d(d_d - d) I + d \, {\omega^i} \times\right) \frac{r_2 - r_1}{\|r_1 - r_2\|}
\end{bmatrix}
\end{align}
\normalsize

Based on these results and  on Theorem \ref{theorem_2}, Algorithm \ref{alg:cluster2} controls the formation to the desired position with the desired geometry and attitude.

    


\begin{algorithm}
\caption{CSC for 2R formation}
\begin{algorithmic}
\State \textbf{Assumptions:} Let $C\subseteq\mathbb{R}^d$ be a compact set and $K_{\omega,p}$, $K_{v,p}$, $K_{\omega,i}$, $K_{v,i}$, $K_{\eta}$, $K_{\xi}$  $:\mathbb{R}^d\rightarrow\mathbb{R}^{3\times 3}$ continuous uniformly positive definite matrices functions on $C$. Let the gain $k_d>0$, and the control period $T_q>0$.
\State \textbf{Input:} Desired cluster attitude $z_d\in\mathbb{R}^3$, desired distance $d_d$ between robots, desired position of the center of the formation, robots positions $r_i$, $i=1,2$.. 
\State \textbf{Output:} Velocity commands $v_{i}$, for robots $i=1,2$.
\State \textbf{Step:} Compute the current pose of the cluster $p=(r_1+r_2)/2$, $z=(r_2-r_1)/\|r_1-r_2\|$.
\State \textbf{Step:} Compute the attitude error of the formation $\overline{\delta q} = \delta q_0 + {\delta q}$, 
with $\delta q_0 = \frac{1+\langle {z},{z_d}\rangle}{\|{z}+{z_d}\|}$ and ${\delta q} = \frac{{z_d}\times {z}}{\|{z}+{z_d}\|}$.
\State \textbf{Step:} Compute the controller in equations \eqref{eq:omegabar-final} to \eqref{eq:xi-final}.
\State \textbf{Step:} Compute the $J^{-1}$ matrix in equation \eqref{eq:Jacob2R}.
\end{algorithmic}\label{alg:cluster2}
\end{algorithm}

The results presented, based on dual quaternions for capturing the pose of a robot cluster, allow for the unification of clusters consisting of two, three, or more robots. The approach to representing the pose is similar across different cluster sizes, although the geometric parameters will vary accordingly. 

For instance, in the 3R case, Algorithm \ref{alg:cluster3} is analogous to Algorithm \ref{alg:cluster2} used in the 2R case, with adjustments made to account for the additional geometric parameters. This similarity demonstrates the flexibility of the dual quaternion representation in managing various cluster sizes while maintaining a consistent method for pose estimation and control.

\section{CONTROL ADAPTATION BASED UPON GEOMETRY}

In the context of multirobot systems, the sensors used to measure position and orientation of the robots are subject to various types of noise. This noise can induce significant variations in measurements, which in turn can affect the precision of the formation control. Since the variations caused by the noise depend on the geometric parameters of the formation, such as the distance between the robots and their relative arrangement, it becomes essential to adapt the controller gains according to these parameters.

For instance, in formations where the robots are very close to each other, even a small error in position measurements 
can have a considerable impact on variables that describe the formation orientation. In such cases, changes in orientation caused by the noise can be much more significant than in formations where the robots are more spread out. Therefore, dynamically adapting the controller gains based on the geometric characteristics of the formation becomes crucial to maintain system performance and ensure an appropriate response to 
sensor noise disturbances.

The next section shows simulation results discussing methods for adjusting the adaptive controller.

\section{SIMULATION RESULTS}
To evaluate the dual quaternion-based control strategy proposed in this work, simulations
were conducted that replicate typical scenarios requiring precise coordination among multiple UAVs. These simulations were carried out under different formation configurations and flight conditions, assessing both the UAVs' ability to maintain the formation and their adaptability to changes in geometric parameters and external disturbances.
The results of these simulations focus on key performance metrics such as stability, formation accuracy, and responsiveness to dynamic flight conditions. 

In order to test the capabilities of the adaptive controller, simulations were run which compare firstly the performance of two 2R formations. 
One of the formations uses the proposed adaptive CSC while the other uses a CSC with fixed gains. Apart from the CSCs, another controller with constant gain handles the formation's
geometry given by the distance between ``$d$'' in the 2R case.

Secondly, the performance of two 3R formations was simulated with comparison in mind as well. In the 3R case, the same CSC was employed, the difference with the 2R case laying in the computation of the dual quaternion error. The geometry control is slightly more elaborated as well as it handles $d_2$, $d_3$, $\alpha$.

\subsection{Two Robot System (2R)}

Simulation results based upon the two scenarios described in 
Fig.~\ref{fig:Maniobra} are presented.
To show the improved responses of the adaptive controllers, synthetic noise is injected on the position measurements of each individual robot. A zero mean band limited 
Gaussian noise with correlation time $t_c=2ms$ and standard deviation $\sigma=1m$ accounts for position determination errors on each of the $x$, $y$ and $z$ coordinates of each of the vehicles. It can be shown that the amplitude of the angles measurement noise for this formation is inversely proportional to the geometry parameter $d$. 

With the proposed adaptive scheme, the controller's gains can be considered as inversely proportional to the square root of the formation's inertia.
This consideration will be relevant when tackling the adaptive design for the 3R formation.

For both scenarios, comparisons were carried out between a controller with constant gains, and another one with adaptive gain scheduling (GS).
The integral $K_\omega^i$ and proportional $K_\omega^p$ gains for the latter were given by:
\begin{xalignat}{2}
    K_\omega^i (\lambda) & = k_\omega^i(\lambda)\, I_3, & K_\omega^p(\lambda) & = k_\omega^p(\lambda)\, I_3,
\end{xalignat}
with $I_3$ begin the $3\times 3$ identity matrix.
The $\lambda$ parameter is in the one dimensional unit simplex with 
 $   \lambda = {(d-d_{min})}/{(d_{max}-d_{min})}$
with $d \in \left[d_{min},d_{max} \right]$. For this problem, $d_{min}=10$ and $d_{max}=50$. The computation of the gains is performed as:
\begin{xalignat}{2}
 k_\omega^{i,p}(\lambda)  &= k^{i_1,p_1}_{\omega} \left(1-\lambda\right) + k^{i_2,p_2}_{\omega}  \lambda .
\end{xalignat}
The designed gains are listed given as follows:
$k^{p_1}_{\omega} = 10$, $k^{i_1}_{\omega} = 50 $, $k^{p_2}_{\omega}  = 60$,
$k^{i_2}_{\omega} = 300$.
%
%
The controller with fixed gains was simulated with its gains being 
an average of the gains of the adaptive controller.

\subsubsection*{2R Formation in hovering with varying $d$}

For this scenario a batch of 1000 simulation runs was completed. 
Figure~\ref{fig:Maniobra} (left) shows this maneuver. 
The trajectory prescribed for the cluster has a fixed position and a fixed orientation while the only variable that changes is $d$. 
For simplicity, to assess pointing error of the 2R formation, we describe this error in terms of spherical coordinates. With a slight abuse of jargon, we talk about \textit{azimuth} and \textit{elevation} error angles, which turn out to be intuitive to understand.

Fig.~\ref{fig:TwoRSimpleManeuverPosta} (top) 
shows the results where an estimation of the mean and standard deviation are carried out. Since this is an only hovering simulation case scenario, 
this figure shows the standard deviations of the ``\textit{azimuth}'' error, its mean being numerically close to zero in a 1000 runs batch. Overlapping in dashed blue line, the trajectory of the $d$ parameter can be seen varying from 50m to 10m. 
 Being very similar, the response of the ``\textit{elevation}'' angle has been skipped for  brevity.

The results show that for large values of $d$, the adaptive controller is more reactive.
For small values of $d$, the adaptive controller reduces its bandwidth (BW) 
showing an improved response to angle measurement noise.
Note that for small $d$, the $3\sigma$ red \textit{cloud} in the background, corresponding to the constant gains controller, has a noticeably larger amplitude than the green $3\sigma$ \textit{cloud} corresponding to the adaptive controller.

\begin{figure}[htbp]
    \centering
    \resizebox{0.4\linewidth}{!}{
\begin{tikzpicture}
	\draw[line width=1mm, ->] (0,0) -- (1,0) node[below] {{\huge $y$}};
	\draw[line width=1mm, ->] (0,0) -- (0,-1) node[below] {{\huge $x$}};
	
	\filldraw[fill=blue!20, draw=blue, thick] (-2,0) circle (0.5);
	\draw[thick, blue] (-2.35,0.35) -- (-1.65,-0.35);
	\draw[thick, blue] (-2.35,-0.35) -- (-1.65,0.35);
	\node[below, blue] at (-2,-0.6) {{\huge $r_2$}};
	
	\filldraw[fill=red!20, draw=red, thick] (2,0) circle (0.5);
	\draw[thick, red] (1.65,0.35) -- (2.35,-0.35);
	\draw[thick, red] (1.65,-0.35) -- (2.35,0.35);
	\node[below, red] at (2,-0.6) {{\huge $r_1$}};
	
	\draw[thick, dashed] (-2,0) -- (2,0);
	
	\draw[line width=1mm, ->] (0,-4) -- (1,-4) node[below] {{\huge $y$}};
	\draw[line width=1mm, ->] (0,-4) -- (0,-5) node[below] {{\huge $x$}};
	
	\filldraw[fill=blue!20, draw=blue, thick] (-5,-4) circle (0.5);
	\draw[thick, blue] (-5.35,-3.65) -- (-4.65,-4.35);
	\draw[thick, blue] (-5.35,-4.35) -- (-4.65,-3.65);
	\node[below, blue] at (-5,-4.6) {{\huge $r_2$}};
	
	\filldraw[fill=red!20, draw=red, thick] (5,-4) circle (0.5);
	\draw[thick, red] (4.65,-3.65) -- (5.35,-4.35);
	\draw[thick, red] (4.65,-4.35) -- (5.35,-3.65);
	\node[below, red] at (5,-4.6) {{\huge $r_1$}};
	
	\draw[thick, dashed] (-5,-4) -- (5,-4);
\end{tikzpicture}
    	}
    \hspace{0.05 \linewidth}  \rule{1px}{60px}
\hspace{0.05 \linewidth}
    \resizebox{0.4\linewidth}{!}{

\begin{tikzpicture}
	\draw[line width=1mm, ->] (-2.6,4) -- (-2.6,5) node[left] {{\huge $x$}};
	\draw[line width=1mm, ->] (-2.6,4) -- (-1.6,4) node[below] {{\huge $y$}};
	
	
	
	\def\RotorOneX{-8}   
	\def\RotorOneY{6.5}    
	\def\RotorTwoX{-8}    
	\def\RotorTwoY{1}    
	
	\filldraw[fill=blue!20, draw=blue, thick] (\RotorOneX,\RotorOneY) circle (0.5);
	\draw[thick, blue] (\RotorOneX-0.35,\RotorOneY+0.35) -- (\RotorOneX+0.35,\RotorOneY-0.35); 
	\draw[thick, blue] (\RotorOneX-0.35,\RotorOneY-0.35) -- (\RotorOneX+0.35,\RotorOneY+0.35); 
	
	\node [blue] at (\RotorOneX, \RotorOneY+1) {{\huge $r_2$}};
	\node [red] at (\RotorTwoX+0.5, \RotorTwoY+1) {{\huge $r_1$}};
	
	\filldraw[fill=red!20, draw=red, thick] (\RotorTwoX,\RotorTwoY) circle (0.5);
	\draw[thick, red] (\RotorTwoX-0.35,\RotorTwoY+0.35) -- (\RotorTwoX+0.35,\RotorTwoY-0.35); 
	\draw[thick, red] (\RotorTwoX-0.35,\RotorTwoY-0.35) -- (\RotorTwoX+0.35,\RotorTwoY+0.35); 
	
	\draw[thick, dashed] (\RotorOneX,\RotorOneY) -- (\RotorTwoX,\RotorTwoY);
	
	
	\def\RotorOneX{-6.5}   
	\def\RotorOneY{6}    
	\def\RotorTwoX{-5}    
	\def\RotorTwoY{2}    
	
	\node [blue] at (\RotorOneX, \RotorOneY+1) {{\huge $r_2$}};
	\node [red] at (\RotorTwoX, \RotorTwoY+1) {{\huge $r_1$}};
	
	\filldraw[fill=blue!20, draw=blue, thick] (\RotorOneX,\RotorOneY) circle (0.5);
	\draw[thick, blue] (\RotorOneX-0.35,\RotorOneY+0.35) -- (\RotorOneX+0.35,\RotorOneY-0.35); 
	\draw[thick, blue] (\RotorOneX-0.35,\RotorOneY-0.35) -- (\RotorOneX+0.35,\RotorOneY+0.35); 
	
	\filldraw[fill=red!20, draw=red, thick] (\RotorTwoX,\RotorTwoY) circle (0.5);
	\draw[thick, red] (\RotorTwoX-0.35,\RotorTwoY+0.35) -- (\RotorTwoX+0.35,\RotorTwoY-0.35); 
	\draw[thick, red] (\RotorTwoX-0.35,\RotorTwoY-0.35) -- (\RotorTwoX+0.35,\RotorTwoY+0.35); 
	
	\draw[thick, dashed] (\RotorOneX,\RotorOneY) -- (\RotorTwoX,\RotorTwoY);
	
	
	\def\RotorOneX{-4}   
	\def\RotorOneY{4}    
	\def\RotorTwoX{-1}    
	\def\RotorTwoY{4}    
	
	\node [blue] at (\RotorOneX, \RotorOneY+1) {{\huge $r_2$}};
	\node [red] at (\RotorTwoX+1, \RotorTwoY) {{\huge $r_1$}};
	
	\filldraw[fill=black!20, draw=black] (\RotorOneX+1.25,\RotorOneY+3.5) rectangle (\RotorOneX+1.5, \RotorOneY+1.5);
	\filldraw[fill=black!20, draw=black] (\RotorOneX+1.25, \RotorOneY-1) rectangle (\RotorOneX+1.5, \RotorOneY-3);
	
	\filldraw[fill=blue!20, draw=blue, thick] (\RotorOneX,\RotorOneY) circle (0.5);
	\draw[thick, blue] (\RotorOneX-0.35,\RotorOneY+0.35) -- (\RotorOneX+0.35,\RotorOneY-0.35); 
	\draw[thick, blue] (\RotorOneX-0.35,\RotorOneY-0.35) -- (\RotorOneX+0.35,\RotorOneY+0.35); 
	
	\filldraw[fill=red!20, draw=red, thick] (\RotorTwoX,\RotorTwoY) circle (0.5);
	\draw[thick, red] (\RotorTwoX-0.35,\RotorTwoY+0.35) -- (\RotorTwoX+0.35,\RotorTwoY-0.35); 
	\draw[thick, red] (\RotorTwoX-0.35,\RotorTwoY-0.35) -- (\RotorTwoX+0.35,\RotorTwoY+0.35); 
	
	\draw[thick, dashed] (\RotorOneX,\RotorOneY) -- (\RotorTwoX,\RotorTwoY);
	
	
	\def\RotorOneX{0.5}   
	\def\RotorOneY{6}    
	\def\RotorTwoX{2}    
	\def\RotorTwoY{2}    
	
	\node [blue] at (\RotorOneX, \RotorOneY+1) {{\huge $r_2$}};
	\node [red] at (\RotorTwoX, \RotorTwoY+1) {{\huge $r_1$}};
	
	\filldraw[fill=blue!20, draw=blue, thick] (\RotorOneX,\RotorOneY) circle (0.5);
	\draw[thick, blue] (\RotorOneX-0.35,\RotorOneY+0.35) -- (\RotorOneX+0.35,\RotorOneY-0.35); 
	\draw[thick, blue] (\RotorOneX-0.35,\RotorOneY-0.35) -- (\RotorOneX+0.35,\RotorOneY+0.35); 
	
	\filldraw[fill=red!20, draw=red, thick] (\RotorTwoX,\RotorTwoY) circle (0.5);
	\draw[thick, red] (\RotorTwoX-0.35,\RotorTwoY+0.35) -- (\RotorTwoX+0.35,\RotorTwoY-0.35); 
	\draw[thick, red] (\RotorTwoX-0.35,\RotorTwoY-0.35) -- (\RotorTwoX+0.35,\RotorTwoY+0.35); 
	
	\draw[thick, dashed] (\RotorOneX,\RotorOneY) -- (\RotorTwoX,\RotorTwoY);
	
	
	\def\RotorOneX{3.5}   
	\def\RotorOneY{6.5}    
	\def\RotorTwoX{3.5}    
	\def\RotorTwoY{1}    
	
	\node [blue] at (\RotorOneX, \RotorOneY+1) {{\huge $r_2$}};
	\node [red] at (\RotorTwoX+0.5, \RotorTwoY+1) {{\huge $r_1$}};
	
	\filldraw[fill=blue!20, draw=blue, thick] (\RotorOneX,\RotorOneY) circle (0.5);
	\draw[thick, blue] (\RotorOneX-0.35,\RotorOneY+0.35) -- (\RotorOneX+0.35,\RotorOneY-0.35); 
	\draw[thick, blue] (\RotorOneX-0.35,\RotorOneY-0.35) -- (\RotorOneX+0.35,\RotorOneY+0.35); 
	
	\filldraw[fill=red!20, draw=red, thick] (\RotorTwoX,\RotorTwoY) circle (0.5);
	\draw[thick, red] (\RotorTwoX-0.35,\RotorTwoY+0.35) -- (\RotorTwoX+0.35,\RotorTwoY-0.35); 
	\draw[thick, red] (\RotorTwoX-0.35,\RotorTwoY-0.35) -- (\RotorTwoX+0.35,\RotorTwoY+0.35); 
	
	\draw[thick, dashed] (\RotorOneX,\RotorOneY) -- (\RotorTwoX,\RotorTwoY);
\end{tikzpicture} 	
    	}
    \caption{Formation hovering with variable $d$ (left) and obstacle avoidance maneuver (right).}
    \label{fig:Maniobra}
\end{figure}

\begin{figure}[htbp]
    \centering
\includegraphics[width=0.5\linewidth]{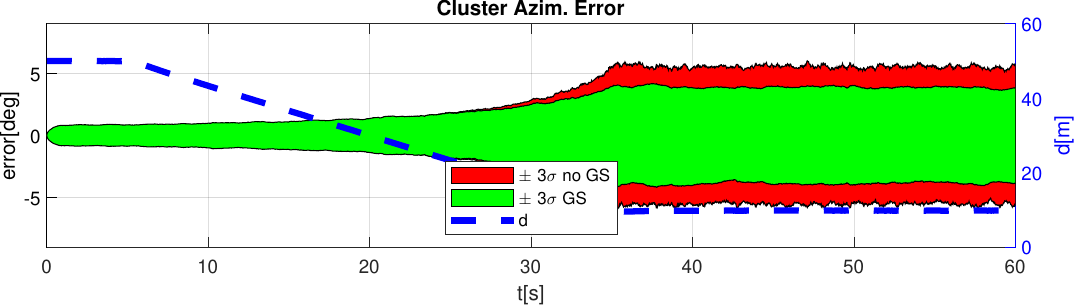}
    \includegraphics[width=0.5\linewidth]{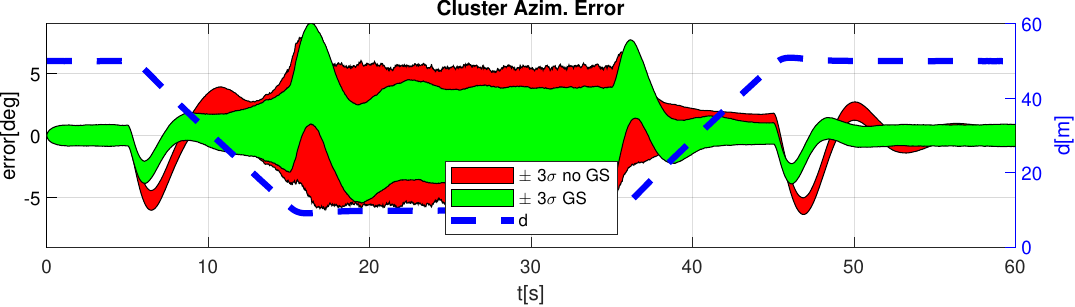}
    \caption{2R formation statistic plot. Top: Azimuth in hovering. Bottom: Azimuth in tracking.}
    \label{fig:TwoRSimpleManeuverPosta}
\end{figure}

\subsubsection*{2R Obstacle avoidance maneuver}

In Fig.~\ref{fig:Maniobra} (right), an obstacle avoidance maneuver is proposed.
The idea behind this maneuver is to pose a challenge on the controllers.
The formation has to turn while shrinking, then pass between the obstacles and finally turn again while expanding and moving towards its final position.

In this simulation, the turning while shrinking maneuver part takes place between t=5 s and t=15 s. In Fig.~\ref{fig:TwoRSimpleManeuverPosta} (bottom), notice the difference in the transient behaviors when the turning starts at t=5 s and when it ends at t=15 s. At t=5 s the higher BW of the adaptive controller renders an improved transient, while at t=15 s it renders an improved response to noise with a lower BW feedback controller. Also notice during the obstacles traversing part from t=20 s to t=35 s, the adaptive controller shows improved $\pm 3\sigma$ bounds.




\subsection{Three Robot System (3R)}

\begin{figure}[b]
 \centering
\resizebox{0.45\linewidth}{!}{
\begin{tikzpicture}
	
	\def\RotorOneX{0}   
	\def\RotorOneY{0}   
	\def\RotorTwoX{3}   
	\def\RotorTwoY{2}   
	\def\RotorThreeX{-3} 
	\def\RotorThreeY{2} 
	
	\filldraw[fill=blue!20, draw=blue, thick] (\RotorOneX,\RotorOneY) circle (0.5);
	\draw[thick, blue] (\RotorOneX-0.35,\RotorOneY+0.35) -- (\RotorOneX+0.35,\RotorOneY-0.35);  
	\draw[thick, blue] (\RotorOneX-0.35,\RotorOneY-0.35) -- (\RotorOneX+0.35,\RotorOneY+0.35);  
	\filldraw[fill=red!20, draw=red, thick] (\RotorTwoX,\RotorTwoY) circle (0.5);
	\draw[thick, red] (\RotorTwoX-0.35,\RotorTwoY+0.35) -- (\RotorTwoX+0.35,\RotorTwoY-0.35);  
	\draw[thick, red] (\RotorTwoX-0.35,\RotorTwoY-0.35) -- (\RotorTwoX+0.35,\RotorTwoY+0.35);  
	\filldraw[fill=green!20, draw=green, thick] (\RotorThreeX,\RotorThreeY) circle (0.5);
	\draw[thick, green] (\RotorThreeX-0.35,\RotorThreeY+0.35) -- (\RotorThreeX+0.35,\RotorThreeY-0.35);  
	\draw[thick, green] (\RotorThreeX-0.35,\RotorThreeY-0.35) -- (\RotorThreeX+0.35,\RotorThreeY+0.35);  
	
	\draw[thick, blue] (\RotorOneX,\RotorOneY) -- (\RotorTwoX,\RotorTwoY);
	\draw[thick, green] (\RotorOneX,\RotorOneY) -- (\RotorThreeX,\RotorThreeY);
	
	\node at (\RotorOneX,\RotorOneY-0.8) {{\huge $r_1$}};
	\node at (\RotorTwoX,\RotorTwoY+0.8) {{\huge $r_2$}};
	\node at (\RotorThreeX,\RotorThreeY+0.8) {{\huge $r_3$}};
	
	\node at (0, 1.2) {{\huge $\alpha$}};
	\draw[thick, dashed] (1.5,1) arc[start angle=40, end angle=140, radius=2];  
	
	\node at (-2, 0.7) {{\huge $d_2$}};
	\node at (2, 0.7) {{\huge $d_3$}};
	
\end{tikzpicture}

\begin{tikzpicture}
	
	\def\RotorOneX{0}   
	\def\RotorOneY{0}   
	\def\RotorTwoX{1}   
	\def\RotorTwoY{2}   
	\def\RotorThreeX{-1} 
	\def\RotorThreeY{2} 
	
	\filldraw[fill=blue!20, draw=blue, thick] (\RotorOneX,\RotorOneY) circle (0.5);
	\draw[thick, blue] (\RotorOneX-0.35,\RotorOneY+0.35) -- (\RotorOneX+0.35,\RotorOneY-0.35);  
	\draw[thick, blue] (\RotorOneX-0.35,\RotorOneY-0.35) -- (\RotorOneX+0.35,\RotorOneY+0.35);  
	\filldraw[fill=red!20, draw=red, thick] (\RotorTwoX,\RotorTwoY) circle (0.5);
	\draw[thick, red] (\RotorTwoX-0.35,\RotorTwoY+0.35) -- (\RotorTwoX+0.35,\RotorTwoY-0.35);  
	\draw[thick, red] (\RotorTwoX-0.35,\RotorTwoY-0.35) -- (\RotorTwoX+0.35,\RotorTwoY+0.35);  
	\filldraw[fill=green!20, draw=green, thick] (\RotorThreeX,\RotorThreeY) circle (0.5);
	\draw[thick, green] (\RotorThreeX-0.35,\RotorThreeY+0.35) -- (\RotorThreeX+0.35,\RotorThreeY-0.35);  
	\draw[thick, green] (\RotorThreeX-0.35,\RotorThreeY-0.35) -- (\RotorThreeX+0.35,\RotorThreeY+0.35);  
	
	\draw[thick, blue] (\RotorOneX,\RotorOneY) -- (\RotorTwoX,\RotorTwoY);
	\draw[thick, green] (\RotorOneX,\RotorOneY) -- (\RotorThreeX,\RotorThreeY);
	
	\node at (\RotorOneX,\RotorOneY-0.8) {{\huge $r_1$}};
	\node at (\RotorTwoX,\RotorTwoY+0.8) {{\huge $r_2$}};
	\node at (\RotorThreeX,\RotorThreeY+0.8) {{\huge $r_3$}};
	
	\node at (0, 1.2) {{\huge $\alpha$}};
	\draw[thick, dashed] (0.7,1.5) arc[start angle=70, end angle=110, radius=2];  
	
	\node at (-1, 0.7) {{\huge $d_2$}};
	\node at (1, 0.7) {{\huge $d_3$}};
\end{tikzpicture}
		}
\hspace{0.05 \linewidth}  \rule{1px}{60px}
\hspace{0.05 \linewidth}
    \resizebox{0.4\linewidth}{!}{
\begin{tikzpicture}
	
	\def\RotorOneX{-3}   
	\def\RotorOneY{2}   
	
	\def\RotorTwoX{1}   
	\def\RotorTwoY{1}   
	
	\def\RotorThreeX{1} 
	\def\RotorThreeY{3} 
	
	\filldraw[fill=blue!20, draw=blue, thick] (\RotorOneX,\RotorOneY) circle (0.5);
	\draw[thick, blue] (\RotorOneX-0.35,\RotorOneY+0.35) -- (\RotorOneX+0.35,\RotorOneY-0.35);  
	\draw[thick, blue] (\RotorOneX-0.35,\RotorOneY-0.35) -- (\RotorOneX+0.35,\RotorOneY+0.35);  
	\filldraw[fill=red!20, draw=red, thick] (\RotorTwoX,\RotorTwoY) circle (0.5);
	\draw[thick, red] (\RotorTwoX-0.35,\RotorTwoY+0.35) -- (\RotorTwoX+0.35,\RotorTwoY-0.35);  
	\draw[thick, red] (\RotorTwoX-0.35,\RotorTwoY-0.35) -- (\RotorTwoX+0.35,\RotorTwoY+0.35);  
	\filldraw[fill=green!20, draw=green, thick] (\RotorThreeX,\RotorThreeY) circle (0.5);
	\draw[thick, green] (\RotorThreeX-0.35,\RotorThreeY+0.35) -- (\RotorThreeX+0.35,\RotorThreeY-0.35);  
	\draw[thick, green] (\RotorThreeX-0.35,\RotorThreeY-0.35) -- (\RotorThreeX+0.35,\RotorThreeY+0.35);  
	
	\draw[thick, blue] (\RotorOneX,\RotorOneY) -- (\RotorTwoX,\RotorTwoY);
	\draw[thick, green] (\RotorOneX,\RotorOneY) -- (\RotorThreeX,\RotorThreeY);
	
	\node at (\RotorOneX,\RotorOneY-0.8) {{\large $r_1$}};
	\node at (\RotorTwoX,\RotorTwoY+0.8) {{\large $r_2$}};
	\node at (\RotorThreeX,\RotorThreeY+0.8) {{\large $r_3$}};
	
	\node at (-1.3, 2) {{\large $\alpha$}};
	\draw[thick, dashed] (-1.55,1.63) arc[start angle=-10, end angle=10, radius=2];  
	
	\node at (-1, 1) {{\large $d_2$}};
	\node at (-1, 3) {{\large $d_3$}};
	
\end{tikzpicture}

\begin{tikzpicture}
	
	\def\RotorOneX{-1}   
	\def\RotorOneY{2}   
	
	\def\RotorTwoX{-0.5}   
	\def\RotorTwoY{0}   
	
	\def\RotorThreeX{-0.5} 
	\def\RotorThreeY{4} 
	
	\filldraw[fill=blue!20, draw=blue, thick] (\RotorOneX,\RotorOneY) circle (0.5);
	\draw[thick, blue] (\RotorOneX-0.35,\RotorOneY+0.35) -- (\RotorOneX+0.35,\RotorOneY-0.35);  
	\draw[thick, blue] (\RotorOneX-0.35,\RotorOneY-0.35) -- (\RotorOneX+0.35,\RotorOneY+0.35);  
	\filldraw[fill=red!20, draw=red, thick] (\RotorTwoX,\RotorTwoY) circle (0.5);
	\draw[thick, red] (\RotorTwoX-0.35,\RotorTwoY+0.35) -- (\RotorTwoX+0.35,\RotorTwoY-0.35);  
	\draw[thick, red] (\RotorTwoX-0.35,\RotorTwoY-0.35) -- (\RotorTwoX+0.35,\RotorTwoY+0.35);  
	\filldraw[fill=green!20, draw=green, thick] (\RotorThreeX,\RotorThreeY) circle (0.5);
	\draw[thick, green] (\RotorThreeX-0.35,\RotorThreeY+0.35) -- (\RotorThreeX+0.35,\RotorThreeY-0.35);  
	\draw[thick, green] (\RotorThreeX-0.35,\RotorThreeY-0.35) -- (\RotorThreeX+0.35,\RotorThreeY+0.35);  
	
	\draw[thick, blue] (\RotorOneX,\RotorOneY) -- (\RotorTwoX,\RotorTwoY);
	\draw[thick, green] (\RotorOneX,\RotorOneY) -- (\RotorThreeX,\RotorThreeY);
	
	\node at (\RotorOneX,\RotorOneY+0.7) {{\large $r_1$}};
	\node at (\RotorTwoX+0.8,\RotorTwoY) {{\large $r_2$}};
	\node at (\RotorThreeX,\RotorThreeY+0.8) {{\large $r_3$}};
	
	\node at (-0.1, 2) {{\large $\alpha$}};
	\draw[thick, dashed] (-0.8,1.25) arc[start angle=-60, end angle=60, radius=1];  
	
	\node at (-0.5, 1) {{\large $d_2$}};
	\node at (-0.5, 3.25) {{\large $d_3$}};
	
\end{tikzpicture}
    	}
    \caption{Variable geometries emphasizing
    characteristics for: roll/yaw GS (left) and pitch GS (right).}
    \label{fig:PitchManeuver3R}
\end{figure}

In the case of the 3R cluster, six different scenarios where simulated, three of them in  hovering, and three other performing  maneuvers. All simulations were carried out with geometry transitions which are described in 
Fig.~\ref{fig:PitchManeuver3R}.

To show the improved characteristics of the adaptive controller, simulated noise was injected on the position measurements of each individual robot in the same way as for the 2R case.
For all scenarios, comparisons were carried out between an adaptive controller and another one with constant gains.
The integral $K_\omega^i$ and proportional $K_\omega^p$ gains for the former were given by:
\begin{align}
 K_\omega^i &=  diag(
 k_{\omega_x}^i(\lambda_x), k_{\omega_y}^i(\lambda_y), k_{\omega_z}^i(\lambda_z)),\\
 K_\omega^p &= diag(k_{\omega_x}^p(\lambda_x),k_{\omega_y}^p(\lambda_y),k_{\omega_z}^p(\lambda_z)).
\end{align}
The $\lambda_{x,y,z}$ parameters are in the one dimensional unit simplex with
\begin{align}
    \lambda_{x,y,z} = 
{\left(\sqrt{I_{_{x,y,z}}}-\sqrt{I_{x,y,z}^{min}}\right)}/{\left(
    \sqrt{I_{x,y,z}^{max}}-\sqrt{I_{x,y,z}^{min}}\right)}. \label{eq:sqrtGS}
\end{align}
The computation of the gains is performed as:
\begin{align}
k_{\omega_{x,y,z}}^i(\lambda) & = k^{i_1}_{\omega_{x,y,z}} \left(1-\lambda_{x,y,z} \right) + k^{i_2}_{\omega_{x,y,z}}  \lambda_{x,y,z},  \\
k_{\omega_{x,y,z}}^p(\lambda) & = k^{p_1}_{\omega_{x,y,z}} \left(1-\lambda_{x,y,z}\right) + k^{p_2}_{\omega_{x,y,z}}  \lambda_{x,y,z}.
\end{align}
For all axes, 
$k_{\omega_{x,y,z}}^p(\lambda) = {k_{\omega_{x,y,z}}^i(\lambda)}/{2}$, 
with the designed integral gains given as follows: 
$k^{i_1}_{\omega_x} = 0.5$, $k^{i_1}_{\omega_y} = 0.32$, $k^{i_1}_{\omega_z} = 0.08 $
$k^{i_2}_{\omega_x} = 2.5$, $k^{i_2}_{\omega_y} = 3.2$, $k^{i_2}_{\omega_z} = 0.8$.

The rationale behind the proposed \eqref{eq:sqrtGS} GS strategy is as follows. In the 2R case, a simple small angles trigonometry argument allows for understanding that the constant power of measurement disturbances in \textit{robot space} translates into a measurement disturbance in \textit{cluster space} whose magnitude changes as the formation changes its geometry. Namely, in the 2R case, disturbance amplitudes are inversely proportional to the $d$ parameter. In the 2R case the $d$ parameter is proportional the square root of the formation's inertia. Results confirm the main idea used for GS in this work which is: the higher the square root of the Inertia, the higher the attitude controllers' gains. Extending this idea to the 3R case, GS takes place based upon the formula in Eq.~\eqref{eq:sqrtGS}.

In the examples shown below, $d_{2,3}^{max}=50m$ and $d_{2,3}^{min}=20m$. In turn, $\alpha_{max}=150^\circ$ and $\alpha_{min}=30^\circ$. These bounds on the geometry parameters allow for a minimum distance of $10m$ between robots 2 and 3, a lower bound compatible with the $\sigma = 1m$ noise in a practical case.

The geometry transitions proposed for simulation in this work are shown in 
Fig.~\ref{fig:PitchManeuver3R}. They have the idea of going from high gains to low gains for all the axes of the cluster's attitude controller. Because of the particular geometry of the 3R cluster, the geometry transition of 
Fig.~\ref{fig:PitchManeuver3R} (left), renders a variation from maximum to minimum possible cluster inertia in the roll and yaw axes while the transition of Fig.~\ref{fig:PitchManeuver3R} 
(right) renders a variation from maximum to minimum possible cluster inertia in the pitch axis.
For comparing, a controller with fixed gains was simulated operating on a formation in parallel with the formation being controlled by the adaptive controller.
The fixed gains of this controller were set to be an average of the gains of the adaptive controller.
%



\subsubsection*{3R Formations in hovering}

For this scenario, a batch of 1000 simulations each one taking 180 seconds was completed with the geometry transition depicted in Fig.~\ref{fig:PitchManeuver3R} (left).
The trajectory prescribed for the cluster prescribes a fixed position and a fixed orientation while $\alpha$, $d_2$ and $d_3$ go all from maximum to minimum. As a result of this geometry transition, the $x$ axis $I_{x}$ moment of inertia goes from maximum to minimum the transition taking place between t=40 s to t=70 s (see the dashed blue line of the 
$\sqrt{I_{rel}^x}$ parameter in Fig.~\ref{fig:3RHoveringYaw} (top). 
The dimensionless $\sqrt{I_{rel}^x}$ represents a quantity related the 
$\lambda_x$ GS parameter of the controller. When $\sqrt{I_{rel}^x}$ is at its minimum $\lambda_x=0$ (lowest gain of the attitude controller). On the opposite,
when $\sqrt{I_{rel}^x}$ is at its maximum $\lambda_x=1$ (highest gain of the attitude controller).

Fig.~\ref{fig:3RHoveringYaw} (top) shows the 
3 standard deviation ($3\sigma$) estimation for the roll angle tracking error (for simulations with a zero angle reference). An estimation of the statistic is carried out based upon 1000 runs, showing in green that in the case where the formation is spread (number ``1'' in 
Fig.~\ref{fig:PitchManeuver3R} left),  
the adaptive controller is more sensitive to noise (red is covered by green) yet showing an acceptable $\pm 3\sigma$. When the formation ends its transition at t=70 s, the adaptive controller (green) employing lower gains, shows an improved response with respect to the fixed gains controller (colored in red 
on the background of the plot).
Similar responses can be seen for the pitch controller the transition being shown in   
Fig.~\ref{fig:PitchManeuver3R} (right)  
and the statistics estimation being shown in 
Fig.~\ref{fig:3RHoveringYaw} (middle).
For the yaw axis, Fig.~\ref{fig:3RHoveringYaw} (bottom) shows the comparison between the adaptive controller and the constant gains controller. With respect to the response to measurement noise, the improvement of the adaptive controller is not remarkable when doing GS
with the gains being based upon variations on the $I_{zz}$ moment of inertia of the formation.
This is due to the influence of the cross moments of inertia on the magnitude of the yaw angle estimation noise.
However, as it will be seen in the next subsection, the adaptation 
strategy based upon the $\sqrt{I_{zz}}$ pays for the yaw controller, 
considerably pays off as tracking performance is concerned with a marginal benefit in the response to noise when completely shrinking formation geometry.


\begin{figure}[t!]
    \centering
    \includegraphics[width=0.5\linewidth]{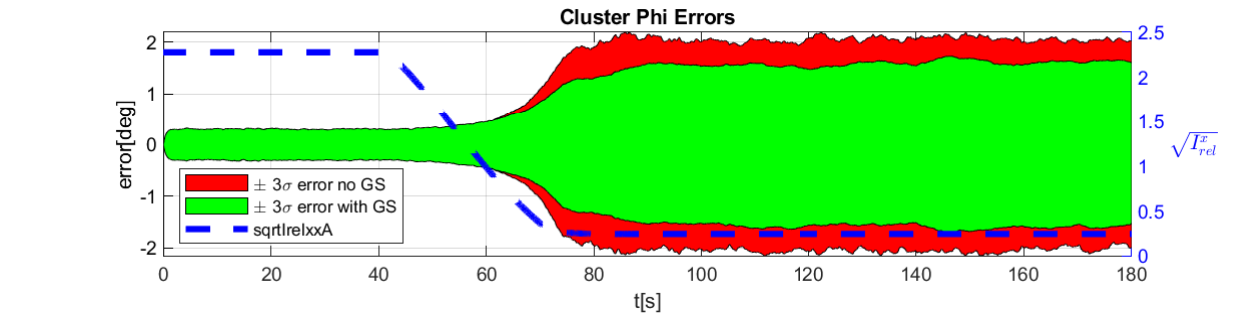}
    \includegraphics[width=0.5\linewidth,height=0.20\linewidth]{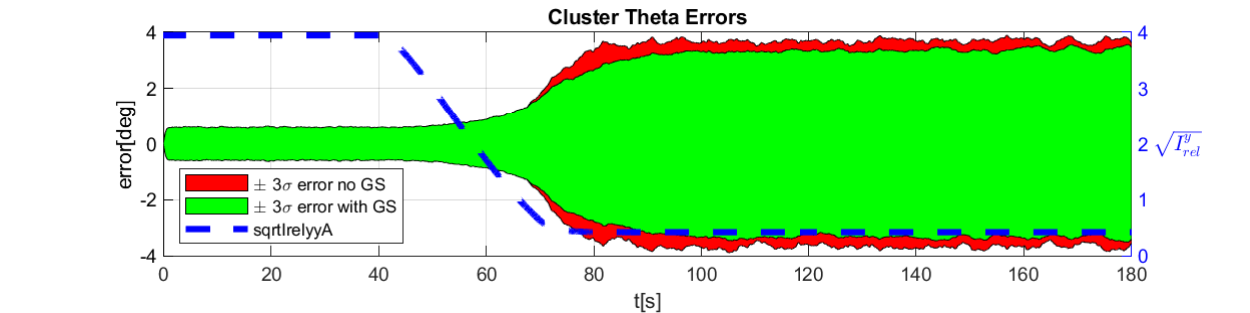}
    \includegraphics[width=0.5\linewidth,height=0.20\linewidth]{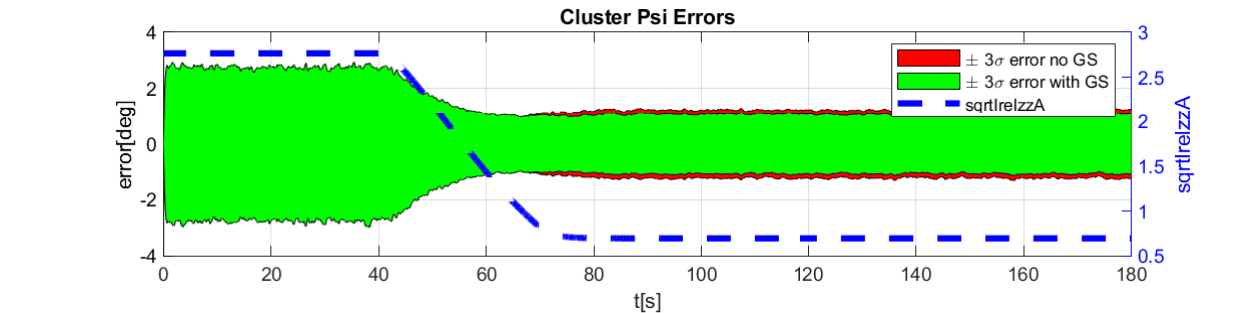}
    \caption{3R Hovering, roll, pitch \& yaw with geometry transitions.}
    \label{fig:3RHoveringYaw}    
\end{figure}


\begin{figure}[t!]
    \centering
    \includegraphics[width=0.5\linewidth]{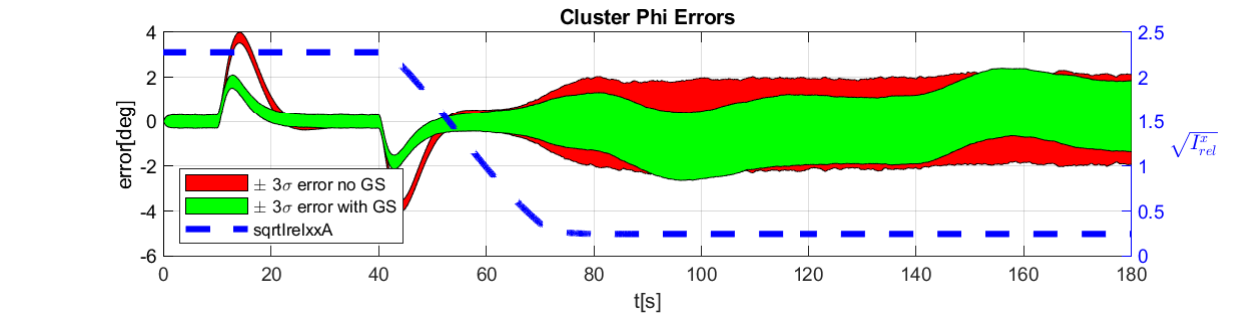}
    \includegraphics[width=0.5\linewidth]{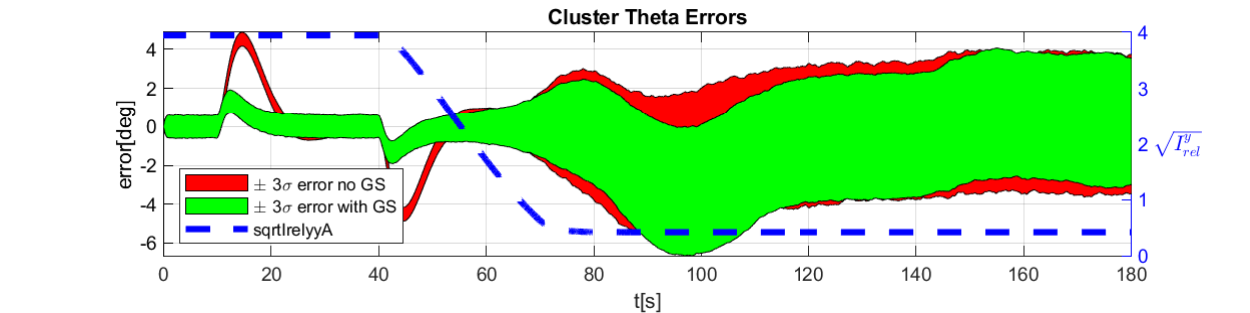}
    \centering
    \includegraphics[width=0.5\linewidth]{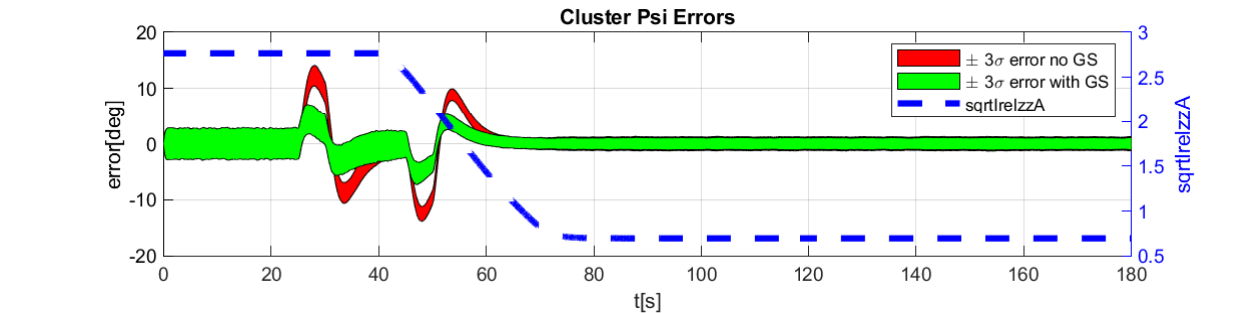}
    \includegraphics[width=0.5\linewidth]{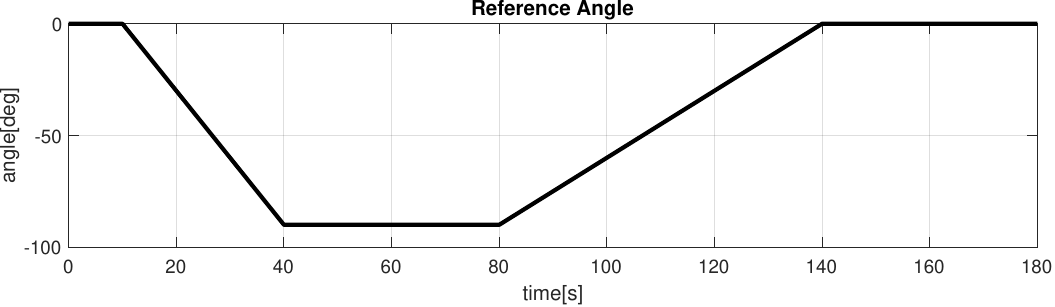}    
    \caption{3R roll, pitch \& yaw maneuvers with geometry transitions.
    At the bottom, the commanded angle profile for all maneuvers.}
    \label{fig:3RYawManeuver}
\end{figure}

\subsubsection*{3R Attitude Maneuvers}

Some of the advantages of the adaptive controller become more evident when commanding  formation maneuvers. The geometry transitions of 
Fig.~\ref{fig:PitchManeuver3R} are here repeated, adding formation rotation commands in roll, pitch and yaw (one axis for each batch of simulations) as shown in Fig.~\ref{fig:3RYawManeuver} (bottom plot).
Note in this plot, that a maneuver with a steeper slope is commanded firstly 
from t=10 s to t=40 s (30 s rotation), since simulations start with geometries rendering maximum moment of inertia in the axis of interest a situation more favorable with respect to measurement noise.
The 60s counter rotation starting at t=80 s is commanded in a softer way given the less favorable geometry with respect to measurement noise where the feedback control strategy has been one where BW is lower when using the adaptive controller. 

Note in Fig.~\ref{fig:3RYawManeuver} (first plot), the way the cloud of roll maneuvers while using the adaptive controller (in green), shows an improved average transient behavior while the acceptable noise rejection is slightly better for the red cloud (controller with constant lower gains).
When the formation transitions to a geometry of lower gains and higher noise, 
the reference signal from 80s to 140s with a softer slope, helps the adaptive controller keep the error within acceptable bounds while showing a better response to noise.
The same can be seen in Fig.~\ref{fig:3RYawManeuver} (second plot) for a pitch maneuver.
As Fig.~\ref{fig:3RYawManeuver} (third plot) is concerned, for yaw, the performance improvement of the adaptive controller with respect to the constant gains controller must be pointed out. Note the response to noise during the first 25s of the statistical time analysis, shows that a more complex adaptation scheme could be tried to improve the response of the adaptive controller to noise.
However a quick trade-off consideration suggests that the improvements with respect to transient performance are good enough to hold on to the proposed adaptation rule.

\section{EXPERIMENTAL VALIDATION}\label{Experimental}

This section is divided into two parts. In the first part, we experimentally validate the performance of the proposed control strategy for formation tracking, considering both two and three UAVs configurations. The tracking is analyzed in terms of the formation pose as well as the geometry that defines it. In the second part, we evaluate the advantages of adapting the controller to variations in the formation geometry, validating the results previously discussed in the simulation section.

The experimental platform consists of three Parrot Bebop~2 quadrotors. Real-time position and orientation measurements are obtained using an OptiTrack motion capture system installed in an indoor flight arena. The setup includes twelve infrared cameras connected to a Windows-based workstation running Motive\textregistered{}, the software responsible for managing the motion tracking. This same machine also executes a MATLAB\textregistered{} routine implementing the proposed control algorithm. Communication with the UAVs is handled by a second computer running a Linux-based ROS server, which acts as a bridge between the control system and the vehicles (see Fig.~\ref{fig:testbed}).

\begin{figure}[t!]
  \centering
  \includegraphics[width=0.5\linewidth]{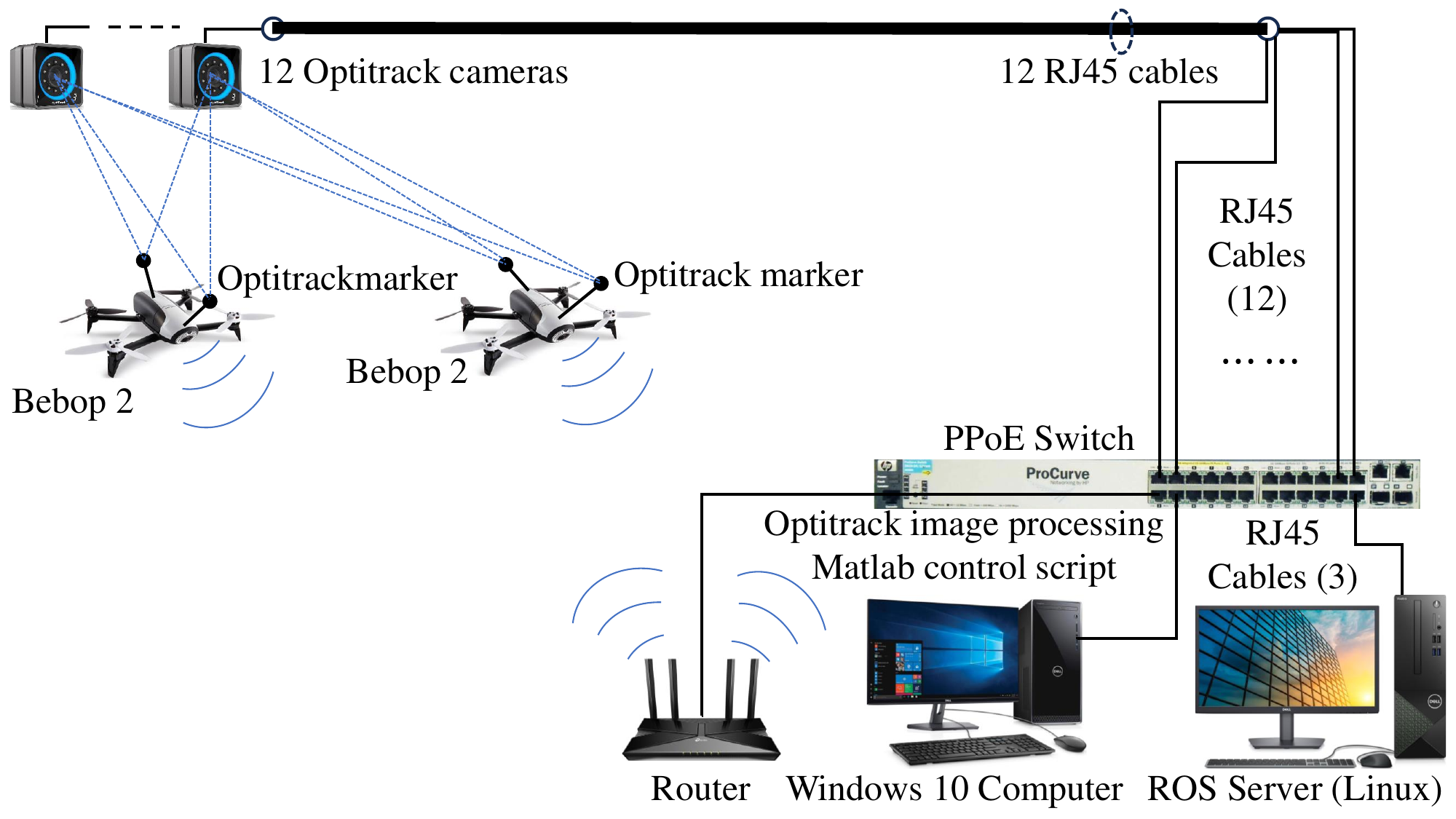}\par
  \caption{Experimental setup.}
  \label{fig:testbed}
\end{figure}

\subsection{Tracking Control Performance for UAV Formation}

Several experiments are conducted to validate the proposed control strategy. The first experiments correspond to a two-robot formation. In figure~\ref{fig:exp1}, the two-robot formation is commanded to follow a circular trajectory. Three circles can be observed: two correspond to the individual robot trajectories, while the central one corresponds to the trajectory of the formation’s center of mass (CM), showing that the inter-robot distance remains constant. In figure~\ref{fig:exp2}, the CM trajectory is depicted, confirming that it accurately tracks the commanded path, and figure~\ref{fig:exp2b} shows that both the inter-vehicle distance and the attitude of the formation remain constant.

To evaluate how the controller responds to variations in inter-robot distance and formation attitude, another experiment was conducted. Figure~\ref{fig:exp3} illustrates the responses to distance commands and to yaw and pitch changes of the formation. In this experiment, the CM position remains constant.

\begin{figure}[htbp]
    \centering
    \includegraphics[width=0.45\textwidth]{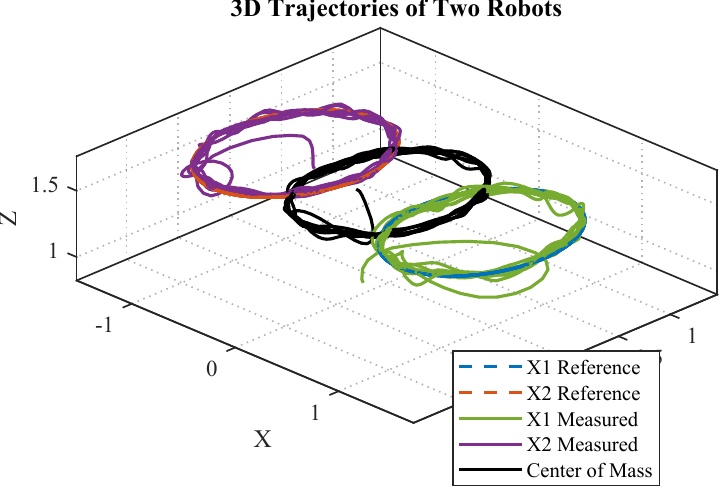}
    \caption{Circular trajectory commanded to the center of mass (CM), with the robots maintaining their relative distance. The dashed lines indicate the ideal trajectories of each robot. The violet and green lines show the actual robot trajectories, while the black line corresponds to the real CM trajectory.}
    \label{fig:exp1}
\end{figure}

\begin{figure}[htbp]
    \centering
    \includegraphics[width=0.45\textwidth]{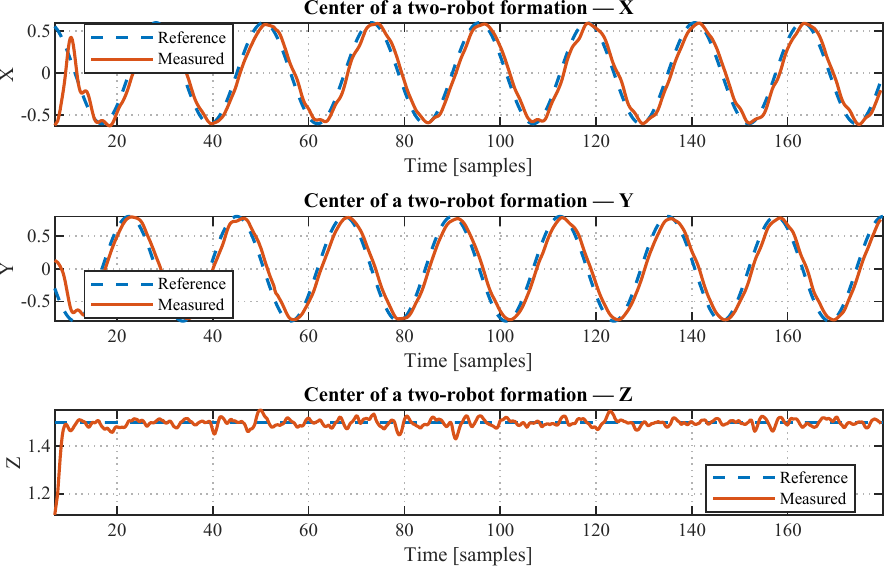}
    \caption{The solid line represents the position of the center of mass of the formation, while the dashed line indicates the desired position.}
    \label{fig:exp2}
\end{figure}

\begin{figure}[htbp]
    \centering
    \includegraphics[width=0.45\textwidth]{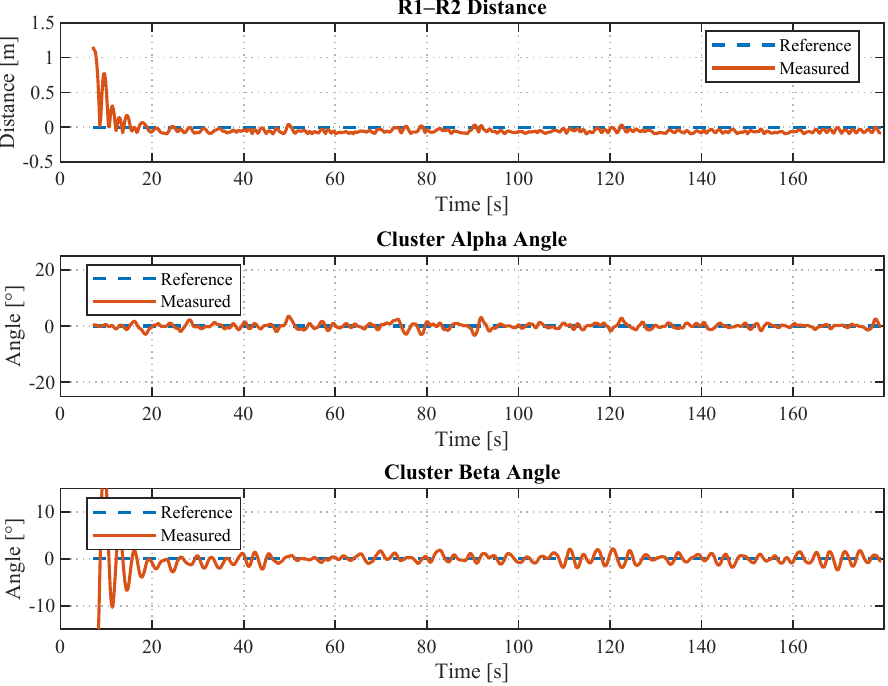}
    \caption{The solid line represents the inter-vehicle distances and the attitude of the formation, while the dashed line indicates the desired position.}
    \label{fig:exp2b}
\end{figure}

\begin{figure}[htbp]
    \centering
    \includegraphics[width=0.45\textwidth]{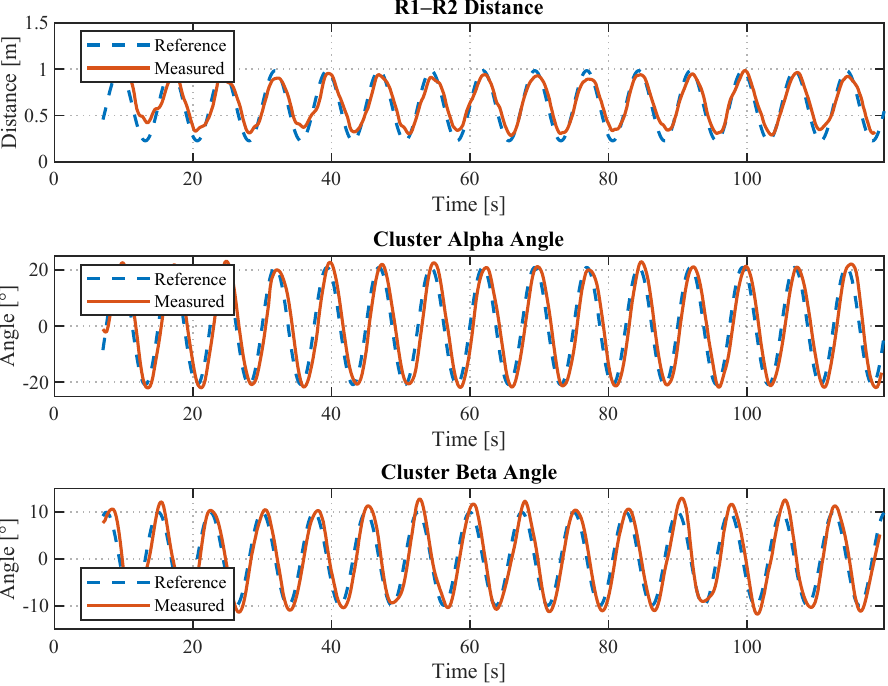}
    \caption{From top to bottom: inter-robot distance, formation yaw angle, and formation pitch angle. Dashed lines represent the reference commands, while solid lines show the measured variables.}
    \label{fig:exp3}
\end{figure}

The following experiments involve formations of three robots. Figure~\ref{fig:formation_center} shows that the center of a three-UAV formation successfully tracks a predefined trajectory. Due to space limitations in the arena, the commanded trajectory in this case is simpler: a square path with variations in both forward direction and altitude, as shown in figure~\ref{fig:3Robot3Dtrajectory}. Figure~\ref{fig:formation_geometry} presents the parameters that describe the formation geometry, namely the inter-vehicle distances and angle $\alpha$, which remain close to their desired constant values during the maneuver. 

Another experiment was conducted to analyze how the formation responds to changes in its geometry. Figure~\ref{fig:adaptive_geometry} illustrates a maneuver in which the formation adapts its shape by modifying the inter-vehicle distances, while simultaneously maintaining the center of mass, as illustrated in figure~\ref{fig:adaptive_geometry_positions}.
 
\begin{figure}[t!]
  \centering
  \includegraphics[width=0.5\linewidth]{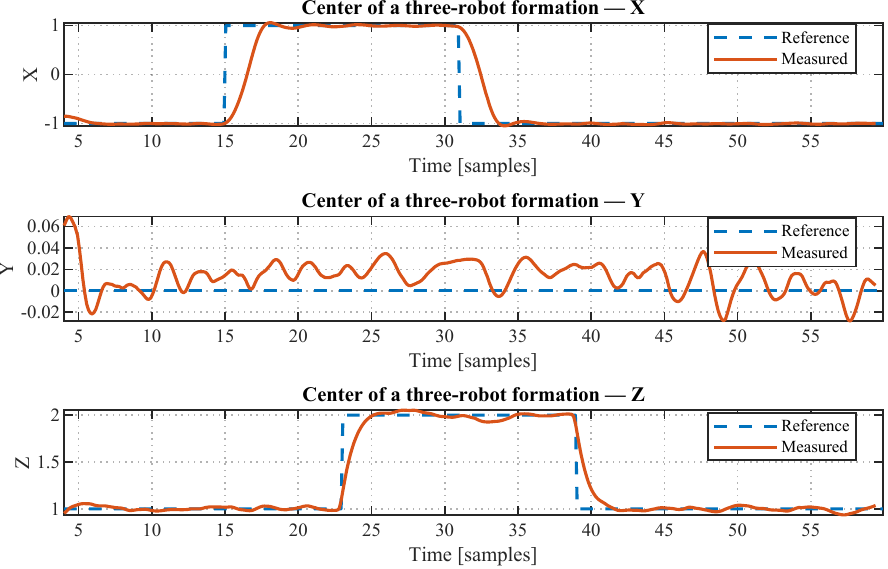}\par
  \caption{Trajectory tracking of the formation center for three UAVs.}  
  \label{fig:formation_center}
\end{figure}

\begin{figure}[t!]
  \centering
  \includegraphics[width=0.5\linewidth]{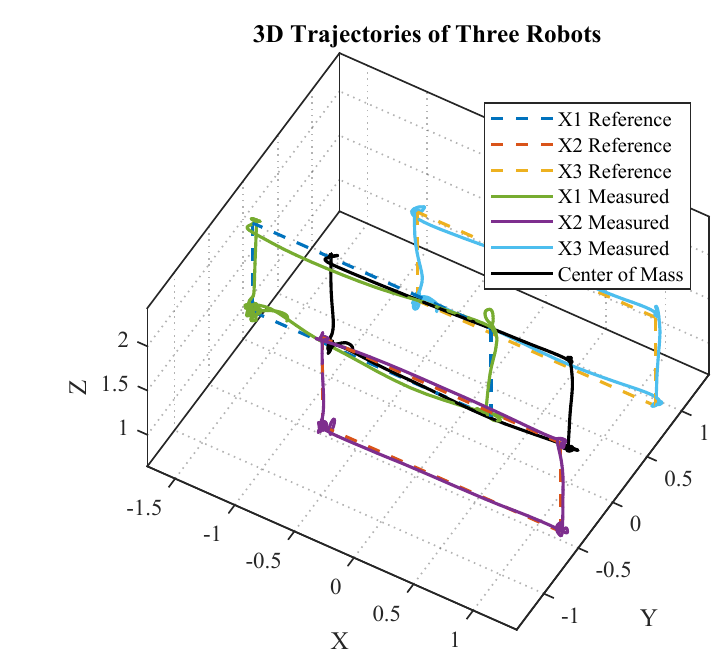}\par
  \caption{Trajectories of the three UAVs showing the formation tracking performance.}
  \label{fig:3Robot3Dtrajectory}
\end{figure}

\begin{figure}[t!]
  \centering
  \includegraphics[width=0.5\linewidth]{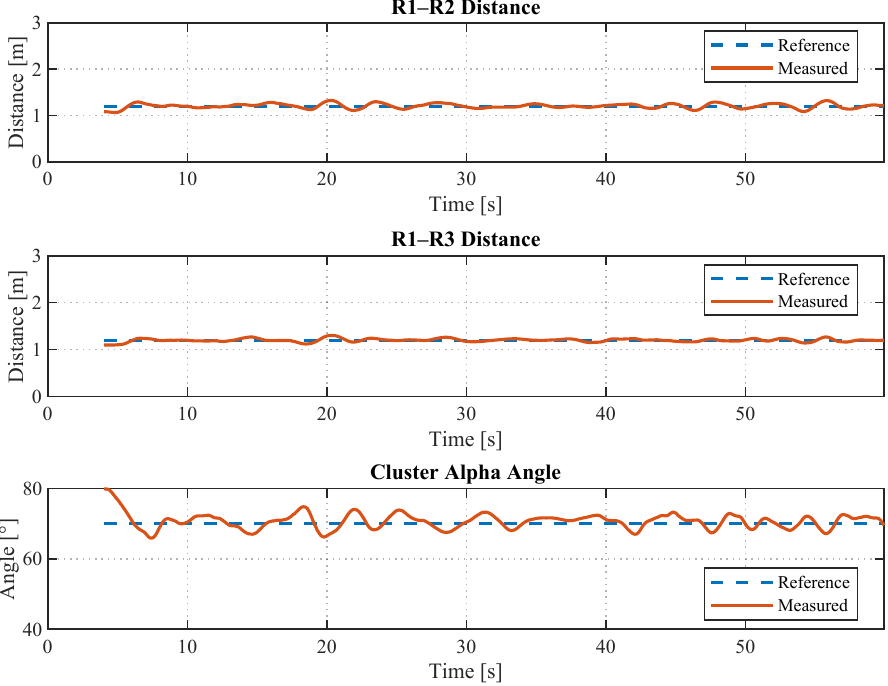}\par
  \caption{Relative distances and formation angle among three UAVs.}
  \label{fig:formation_geometry}
\end{figure}

\begin{figure}[t!]
  \centering
  \includegraphics[width=0.5\linewidth]{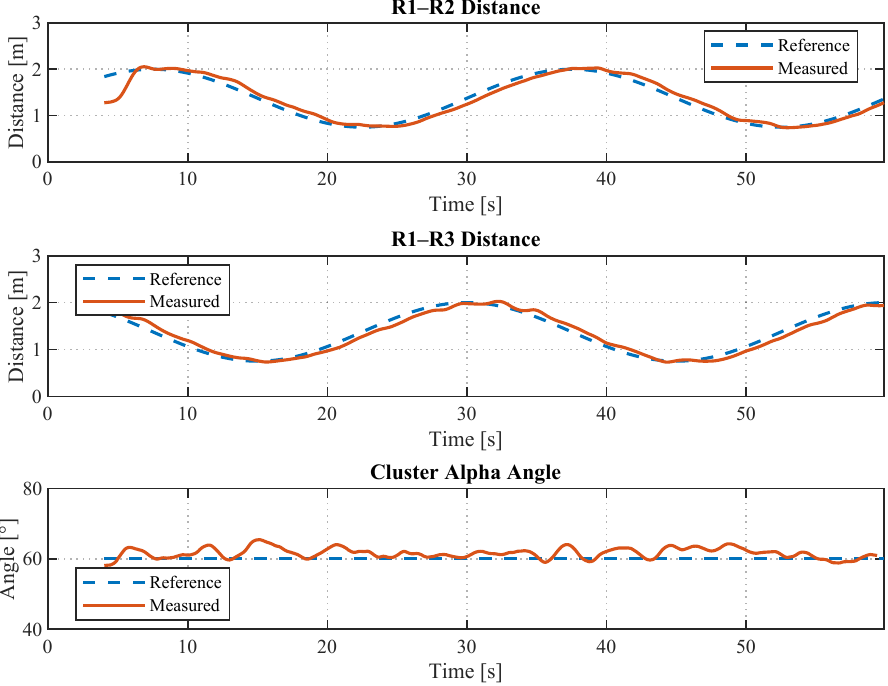}\par
  \caption{Trajectories with varying inter-vehicle distances.}
  \label{fig:adaptive_geometry}
\end{figure}

\begin{figure}[t!]
  \centering
  \includegraphics[width=0.5\linewidth]{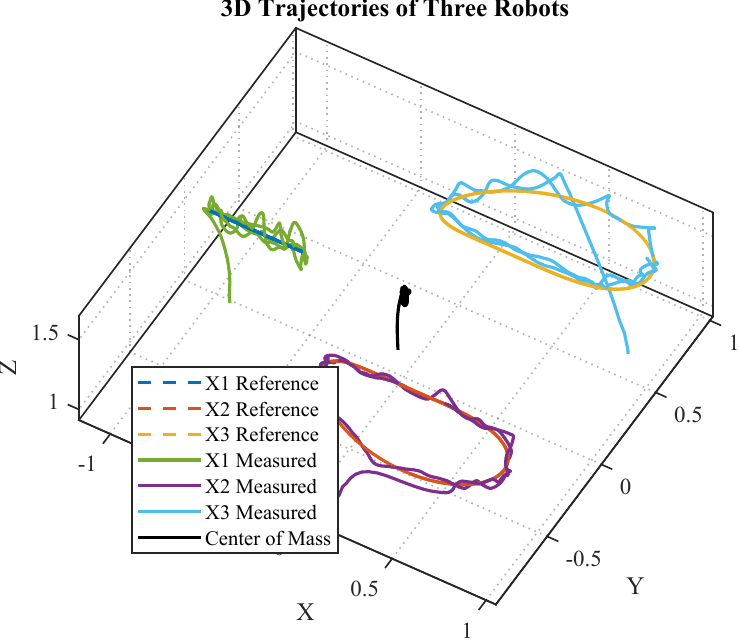}\par
  \caption{Trajectories with varying inter-vehicle distances, while maintaining the center of mass of the formation.}
  \label{fig:adaptive_geometry_positions}
\end{figure}

Another experiment was conducted to evaluate the attitude tracking performance. In one of these experiments, a roll angle reference trajectory was sent to the formation while maintaining its geometry. Figure~\ref{fig:attitude_cluster} illustrates how the formation follows the commanded trajectory. In figure~\ref{fig:attitude_cluster_pitch}, the resulting trajectory of the robots can be observed. It also shows that the center of mass, as well as the geometry of the formation, remains constant.

\begin{figure}[t!]
  \centering
  \includegraphics[width=0.5\linewidth]{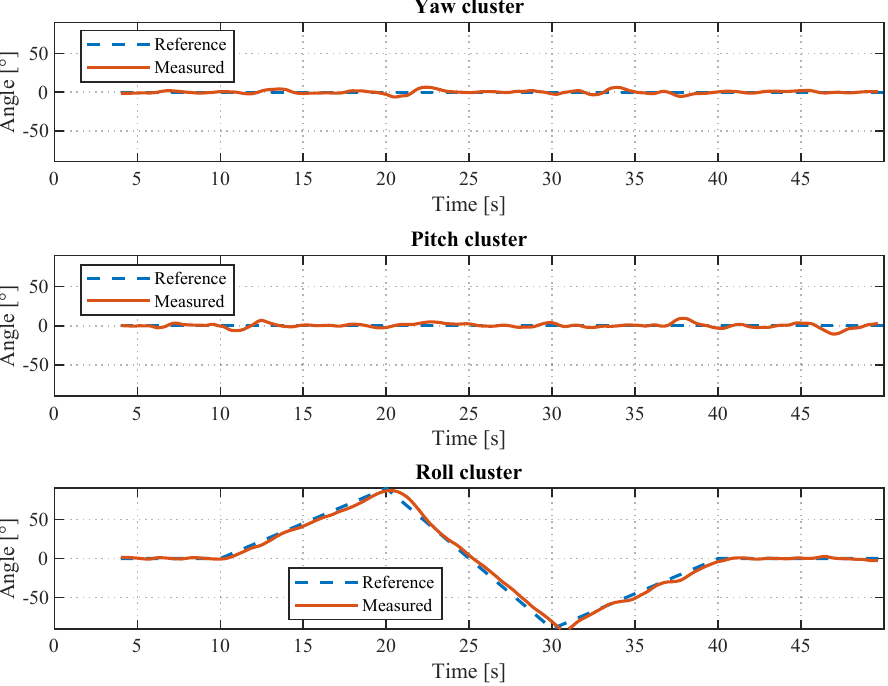}\par
  \caption{Attitude tracking of the formation in yaw, pitch, and roll. Dashed lines denote reference commands, while solid lines show the measured responses.}
  \label{fig:attitude_cluster}
\end{figure}

\begin{figure}[t!]
  \centering
  \includegraphics[width=0.5\linewidth]{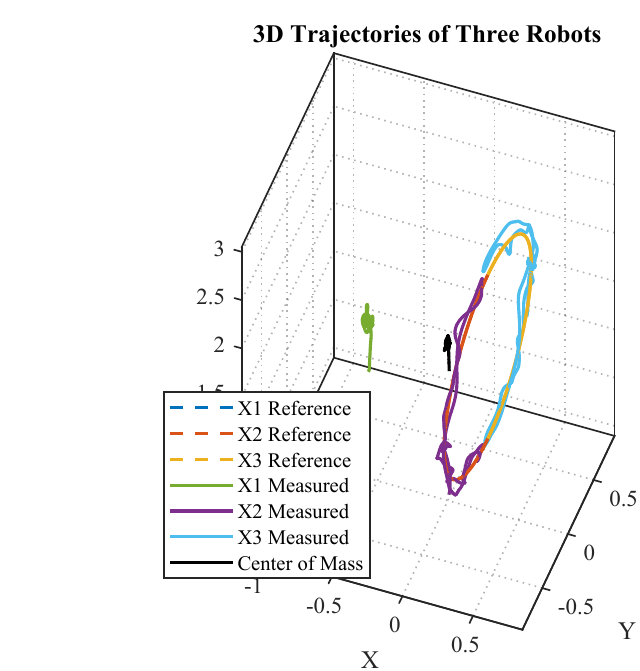}\par
  \caption{Attitude tracking of the formation in yaw, pitch, and roll. Dashed lines denote reference commands, while solid lines show the measured responses.}
  \label{fig:attitude_cluster_pitch}
\end{figure}

\subsection{Adaptation to Geometric Variations}

In this section, we experimentally analyze how the control performance improves when the controller gains are adapted according to the geometry of the formation. Since the space available for testing is very limited, the range of possible formation shape variations is also restricted, making the impact of gain adaptation difficult to appreciate. However, thanks to the high accuracy of the motion capture system, the improvement in control performance due to gain adaptation, as previously proposed, can still be observed. To demonstrate this, several flight tests were carried out by modifying the inter-vehicle distances, while a yaw reference was also commanded to the formation.

Figure~\ref{fig:Adapt3Dtray} shows the trajectory followed by the vehicles during the experiment. 
These trajectories can be observed in figures~\ref{fig:AdaptDistances} and \ref{fig:AdaptYaw} for the inter-vehicle distances and formation orientation, respectively. Between 80 and 180 seconds, yaw commands and inter-vehicle distance variations (between 120 and 155 seconds) were applied while keeping the controller gains fixed, i.e., without adapting them to changes in the formation geometry. At second 180, an adaptive controller—whose gains adjust according to the geometry—was employed, and the same yaw commands and inter-vehicle distance variations were applied, as shown in figures~\ref{fig:AdaptDistances} and \ref{fig:AdaptYaw}.

In particular, figure~\ref{fig:AdaptCM} highlights the improvement achieved by introducing adaptive control. This can be seen between 120 and 160 seconds, where the fixed-gain controller exhibits lower performance compared to the interval between 235 and 280 seconds, during which the formation was subjected to the same trajectory but with gain adaptation enabled. This can also be noted in figure~\ref{fig:AdaptDistances}.

\begin{figure}[t!]
  \centering
  \includegraphics[width=0.5\linewidth]{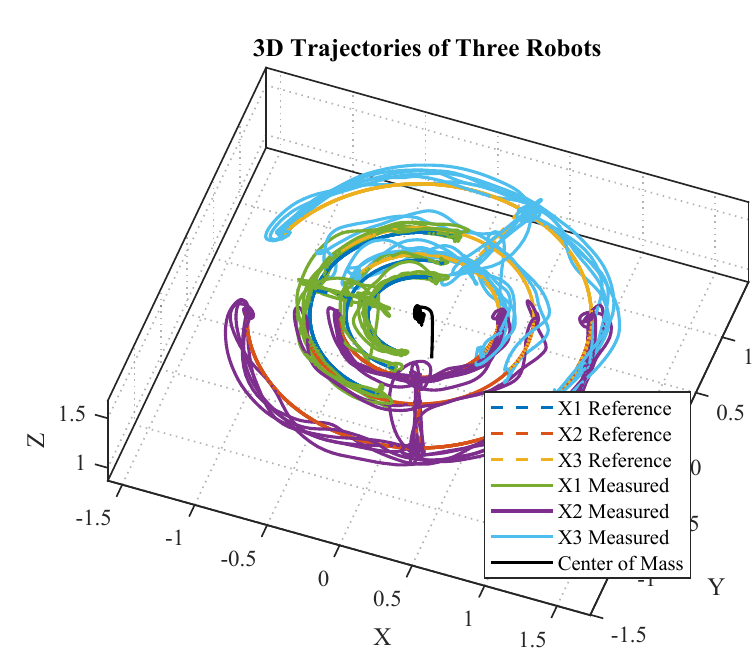}\par
  \caption{Three-dimensional trajectory of the formation during the experiment. 
  The vehicles follow a yaw reference while the inter-vehicle distances are varied.}
  \label{fig:Adapt3Dtray}
\end{figure}

\begin{figure}[t!]
  \centering
  \includegraphics[width=0.5\linewidth]{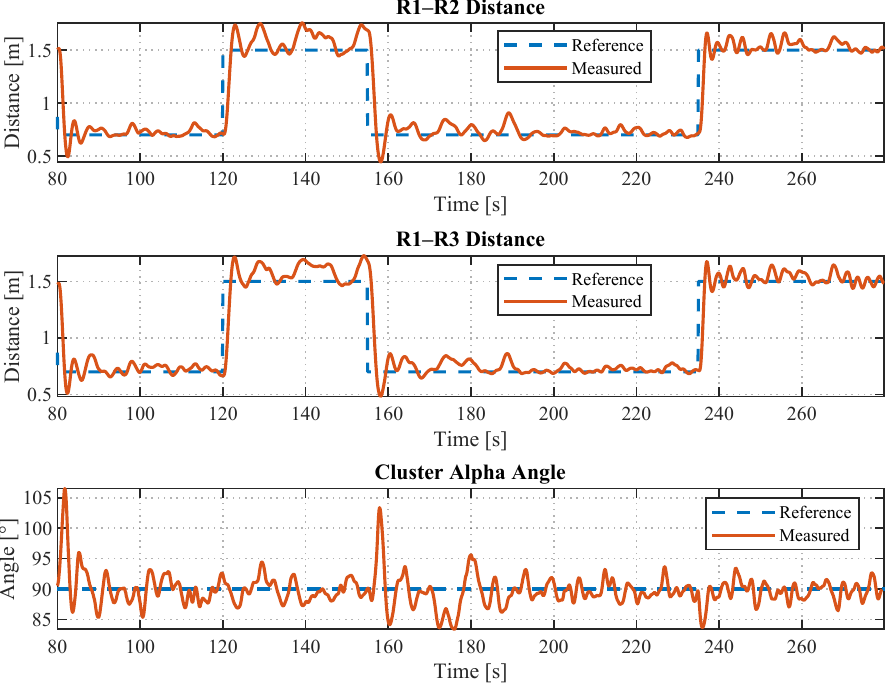}\par
  \caption{Inter-vehicle distance variations and yaw reference commands applied to the formation during the experiment. Dashed lines denote the reference signals, while solid lines show the measured responses.}
  \label{fig:AdaptDistances}
\end{figure}

\begin{figure}[t!]
  \centering
  \includegraphics[width=0.5\linewidth]{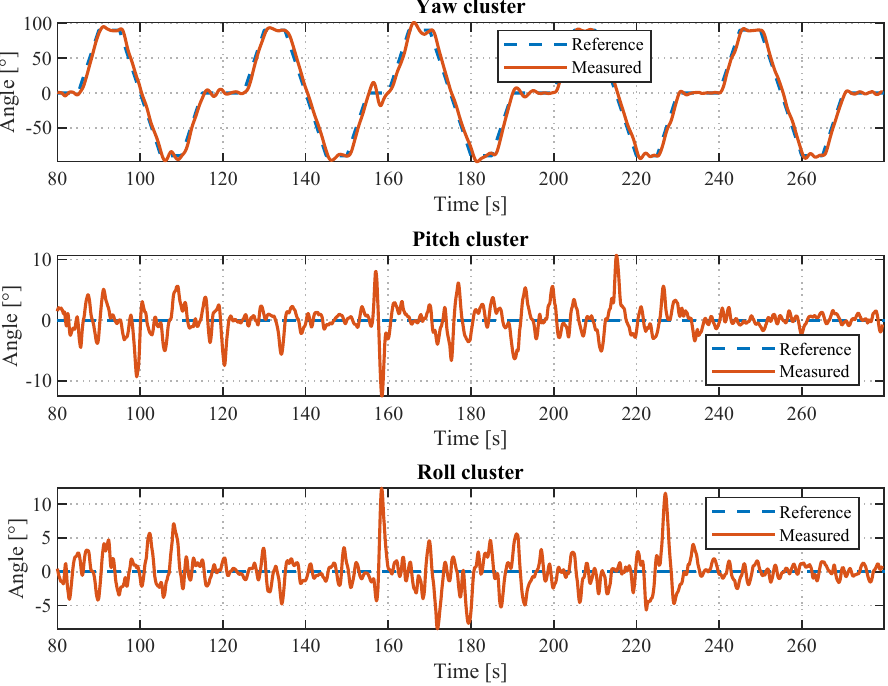}\par
  \caption{Yaw attitude tracking of the formation. Dashed lines denote the reference commands, while solid lines represent the measured responses.}
  \label{fig:AdaptYaw}
\end{figure}

\begin{figure}[t!]
  \centering
  \includegraphics[width=0.5\linewidth]{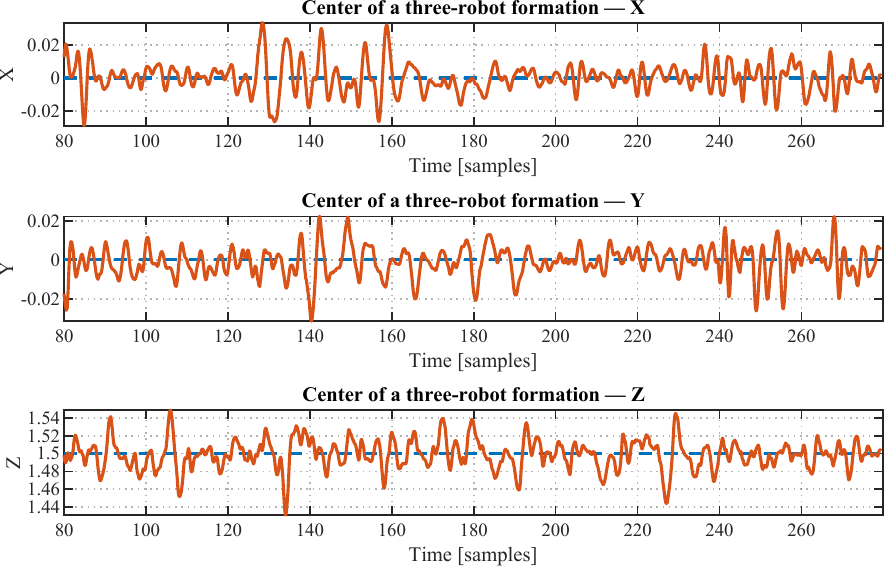}\par
  \caption{Center-of-mass tracking of the three-robot formation. Dashed lines denote the reference trajectory, while solid lines show the measured response.}
  \label{fig:AdaptCM}
\end{figure}

\section{CONCLUSION}

This paper presented a formation control strategy for UAVs based on a dual quaternion representation. By exploiting the compact and geometrically consistent properties of dual quaternions, the proposed controller unifies translational and rotational dynamics in a single framework, thereby simplifying control law design. A key contribution of this work is the introduction of adaptive controller gains that depend on formation geometry, extending previous fixed-gain approaches.

Simulation and experimental results with two- and three-UAV formations confirm the effectiveness of the proposed controller. The results show accurate tracking of commanded trajectories and attitude angles while preserving the formation geometry. Moreover, the adaptive gain strategy significantly improves performance, especially during inter-vehicle distance variations and formation reconfigurations. The close agreement between simulation and experimental results highlights both the robustness and practical applicability of the proposed dual quaternion-based control strategy across a wide range of operating conditions.

\bibliographystyle{plain} \bibliography{referencias}

\end{document}